\documentclass{article} 
\usepackage{iclr2026_conference,times}

\usepackage{amsmath,amsfonts,bm}

\def\eqref#1{equation~\ref{#1}}

\def\1{\bm{1}}

\DeclareMathAlphabet{\mathsfit}{\encodingdefault}{\sfdefault}{m}{sl}
\SetMathAlphabet{\mathsfit}{bold}{\encodingdefault}{\sfdefault}{bx}{n}

\usepackage{hyperref}
\usepackage{url}
\usepackage{graphicx}
\usepackage{xcolor} 
\usepackage{colortbl}

\usepackage{fontawesome5} 

\usepackage{algorithm}
\usepackage{algorithmic}

\usepackage{subfigure}
\usepackage{wrapfig}
\usepackage{booktabs}

\usepackage{enumitem}  
\usepackage{amsmath}
\usepackage{amssymb}
\usepackage{mathtools}
\usepackage{amsthm}
\usepackage{multirow}

\theoremstyle{plain}
\newtheorem{theorem}{Theorem}[section]
\newtheorem{proposition}[theorem]{Proposition}
\newtheorem{lemma}[theorem]{Lemma}

\theoremstyle{definition}
\newtheorem{definition}[theorem]{Definition}

\theoremstyle{remark}

\usepackage{tcolorbox}

\title{\textcolor{black}{Buffer Matters: Unleashing the Power of Off-Policy Reinforcement Learning in Large Language Model Reasoning}}

\author{
Xu Wan$^{\spadesuit \ \clubsuit \ \heartsuit}$, 
Yansheng Wang$^{\clubsuit}$, 
Wenqi Huang$^{\diamondsuit}$,
Mingyang Sun$^{* \heartsuit}$ \thanks{Coresponding Author}\\ 
$^{\spadesuit}$ Zhejiang University
$^{\clubsuit}$ Bytedance Seed Robotics
$^{\diamondsuit}$ China Southern Power Grid 
$^{\heartsuit}$ Peking University 
}

\usepackage{hyperref}       
\usepackage{hyperref}[citecolor=magenta]
\hypersetup{
    colorlinks = true,
    citecolor = {magenta},
}

\iclrfinalcopy 
\begin{document}

\maketitle

\begin{abstract}

\textcolor{black}{
 Traditional on-policy \textcolor{black}{Reinforcement Learning with Verifiable Rewards (RLVR)} frameworks suffer from experience waste and reward homogeneity, which directly hinders learning efficiency on difficult samples during large language models post-training. In this paper, we introduce Batch Adaptation Policy Optimization (BAPO), an off-policy \textcolor{black}{RLVR} framework to improve the data efficiency in large language models post-training. It dynamically selects training batches by re-evaluating historically difficult samples and reusing high-quality ones, while holding a lower bound guarantee for policy improvement. Extensive experiments further demonstrate that BAPO achieves an average 12.5\% improvement over GRPO across mathematics, planning, and visual reasoning tasks. Crucially, BAPO successfully resolves 40.7\% of problems that base models consistently fail to solve. 
 \faGithub \ \textit{\textbf{The code is available in \href{https://github.com/waunx/BAPO_ICLR}{Here}.}}
}
\end{abstract}

\section{Introduction}

Reinforcement Learning from Human Feedback (RLHF) has emerged as a transformative paradigm for aligning Large Language Models (LLMs) with human preferences and improving their performance on complex reasoning tasks ~\citep{ouyang2022training, bai2022training}. A significant recent evolution is Reinforcement Learning with Verifiable Rewards (RLVR) ~\citep{lambert2024tulu}, which replaces costly neural reward models with deterministic verification functions for more efficient and reliable training ~\citep{guo2025deepseek}. Numerous on-policy RL optimization methods, particularly Group Relative Policy Optimization (GRPO) ~\citep{shao2024deepseekmath}, and its variants like Dynamic Sampling Policy Optimization (DAPO) ~\citep{yu2025dapo}, Group Sequence Policy Optimization (GSPO) ~\citep{zheng2025group}, have demonstrated remarkable success in LLM post-training scenarios, achieving exceptional performance on mathematical reasoning, code generation, and various downstream applications ~\citep{yang2025qwen3, chen2025towards, shen2025vlm}.

\begin{wrapfigure}{r}{0.45\textwidth}
  \centering
  \vskip -0.2in
  \includegraphics[width=0.45\textwidth]{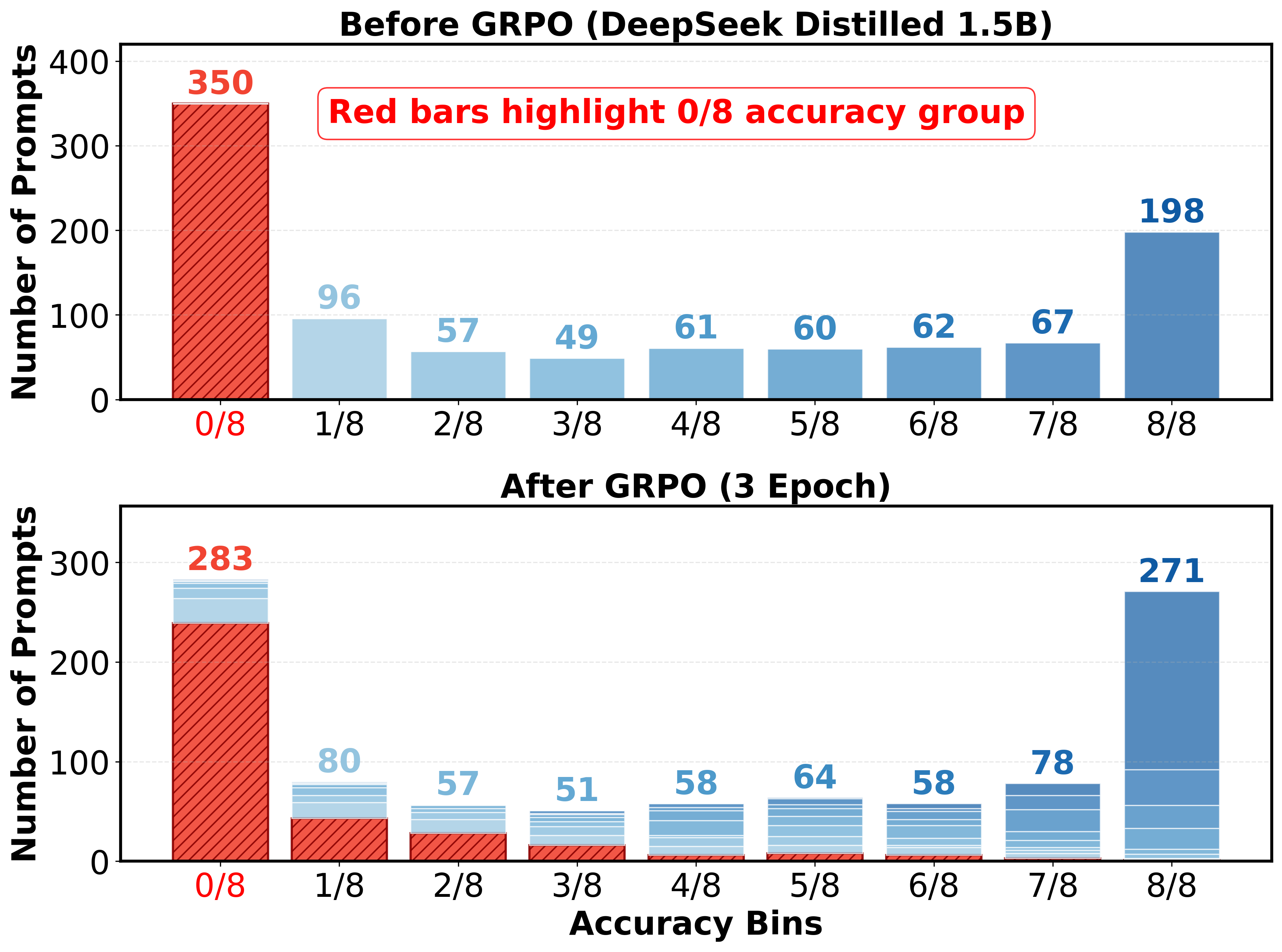} 
  \caption{Tracking the sample counts across accuracy groups of the mathematical dataset before and after GRPO post-training.}
  \label{fig:Motivation}
  \vskip -0.4in
\end{wrapfigure}

Although with lower bound guarantees of policy improvement theoretically~\citep{mroueh2025reinforcement}, existing RL post-training frameworks still face significant efficiency challenges in practice.
As shown in Figure~\ref{fig:Motivation}, models after GRPO post-training struggle to handle difficult samples, especially those with zero accuracy in the initial rollout group. The reasons are twofold: (1) \textbf{Homogeneous rewards}: Recent investigations~\citep{hong2025glm, simoni2025gtpotrajectorybasedpolicyoptimization} reveal that samples at both extremes of difficulty offer minimal benefit for post-training policy improvement. This arises because advantage estimation in most GRPO-based methods relies heavily on relative reward diversity within each group. Consequently, when intra-group rewards are identical, the lower bound guarantee for policy improvement collapses~\citep{zhang2025srpo, mroueh2025revisiting}, resulting in negligible effective gradient contributions~\citep{liu2025ghpo, yu2025dapo}. (2) \textbf{Waste of experience}: Given the sensitivity of policy improvement to intra-group reward variance, uneven difficulty distributions yield significantly fewer high-quality samples than the configured batch size implies. Crucially, since these methods are primarily on-policy and lack experience replay, each rollout group is consumed only once, leading to a substantial waste of valuable training data~\citep{sun2025improving, li2025repo}.

A straightforward solution is to adopt off-policy rather than on-policy training paradigms, which has been established in traditional RL tasks as a viable solution to increase sample efficiency and diversity in the training batch~\citep{queeney2021generalized, hilton2022batch, meng2023off}. 
However, \textcolor{black}{naively applying sample-reusing schemes to RL frameworks may exacerbate instability during LLM post-training, leading to entropy collapse, and ultimately performance degradation ~\citep{yu2025dapo, he2025skywork, chen2025acereason}.}

Thus, to \textcolor{black}{systematically exploring the utility of stale off-policy experience in RLVR post-training, we incorporates multiple off-policy strategies into on-policy RLVR framework to dissect effective pathways for historical data utilization.} The main contributions of this paper are as follows:

(1) We propose a difficulty-aware experience replay mechanism as a practical solution for efficient off-policy data utilization. Unlike the simple mixing of the buffer's data and online data, we actively re-evaluate historical hard prompts to drive exploration while directly reusing high-quality trajectories with a dynamic quality threshold.

(2) Theoretically, we prove that under certain assumptions, the proposed adaptive construction mechanism mitigates the homogeneous reward issue via adaptive batch construction and KL-constrained updates.

(3) By integrating it into multiple reasoning tasks with different LLM backbones, we validate the proposed \textbf{B}atch \textbf{A}daptation \textbf{P}olicy \textbf{O}ptimization (BAPO) method achieves better convergence and yields greater improvements on solving difficult samples compared to existing on-policy and off-policy RLVR frameworks.

\section{Related Work}

\subsection{On-policy RL Post-training Framework}

We first review the concept of on-policy RLVR, where the core objective is to optimize an LLM policy to maximize the outcome response reward. Let \( x \in \mathcal{X} \) represent the input prompts, and \( y \in \mathcal{Y} \) denote responses generated by the LLM policy $\pi_{\theta}$. The terminal reward \( r(x, y) \in \{0, 1\} \) is determined by a deterministic verification function ~\citep{lambert2024tulu, guo2025deepseek}. 
Following the setting of GRPO ~\citep{shao2024deepseekmath}, the objective is formulated as:
\begin{equation}
    \frac{1}{G} \sum_{i=1}^{G} \frac{1}{|y_i|} \sum_{t=1}^{|y_i|} \min \left( \rho_{i,t}(\theta) \hat{A}_{i,t}, \text{clip}(\rho_{i,t}(\theta), 1-\varepsilon, 1+\varepsilon) \hat{A}_{i,t} \right) - \beta \cdot \mathbb{D}_{\text{KL}}(\pi_\theta || \pi_{\text{ref}})
\end{equation}
where \( \mathcal{G} = \{y_1, y_2, \dots, y_G\} \) represents a $G$-size group of responses sampled from \( \pi_{\theta_t}(\cdot | x) \) for each input \( x \); \( \rho_{i,t}(\theta) \) is the probability ratio $\frac{\pi_\theta\left(y_i^t \mid y_i^{<t},x\right)}{\pi_{\theta_{\text{old}}}\left(y_i^t \mid y_i^{<t}, x\right)}$ between current policy and old policy \( \pi_{\theta_\text{old}} \)  for the $i$-th responses' $t$-th token, \( \varepsilon \) limits the magnitude of policy updates; and \( \mathbb{D}_{\text{KL}} \) constrains the policy  \( \pi_\theta \) from deviating too far from a reference policy \( \pi_{\text{ref}} \). Crucially, \( \hat{A}_{i,t} \) denotes the estimated advantage of response \( y_i \) for input \( x \), which is derived from the standardization of rewards using the statistical properties of group \( \mathcal{G} \). For the \( i \)-th response \( y_i \in \mathcal{G} \) with reward \( r_i = r(x, y_i) \), the estimated advantage is:
\begin{equation}
    \hat{A}_{i,t} = \frac{r_i - \text{mean}(\{r_\ell\})}{\sqrt{\text{std}^2(\{r_\ell\}) + \varepsilon}}
\end{equation}
where \( \text{mean}(\{r_\ell\}) \) and \( \text{std}^2(\{r_\ell\}) \) are the empirical mean and variance of rewards in group \( \mathcal{G} \), respectively.

To enhance the practical efficiency of GRPO, a series of improved on-policy frameworks has been proposed. For instance, DAPO~\citep{yu2025dapo} sets distinct clipping ranges \(\varepsilon_{\text{low}}\) and \(\varepsilon_{\text{high}}\), and employs a dynamic sampling strategy to ensure \(\hat{A}_{i,t} \neq 0\). However, it consumes approximately four times the number of rollouts~\citep{qu2025can} compared to GRPO. Meanwhile, GSPO~\citep{zheng2025group} abandons the token-level ratio $\rho_{i,t}(\theta)$ and shifts to the sequence level $s_{i}(\theta)$, which has been validated to maintain more stable training, particularly in Mixture-of-Experts (MoE) architectures.

While the details of these methods vary, they all adhere to the on-policy framework for sampling and updates: the inference server is updated in synchronization with the trainer parameters, and the sampling strategy follows the `` use-once-and-discard'' principle throughout the training process.

\subsection{Off-policy RL Post-training Framework}
In contrast, as shown in Figure~\ref{fig::Overview}, off-policy RL post-training frameworks operate under a distinct paradigm, characterized by two core components: off-policy rollout for generating responses and off-policy training for constructing the training batch, as detailed below.

\begin{wrapfigure}{r}{0.6\textwidth}
  \centering
  \vskip -0.2in
  \includegraphics[width=0.6\textwidth]{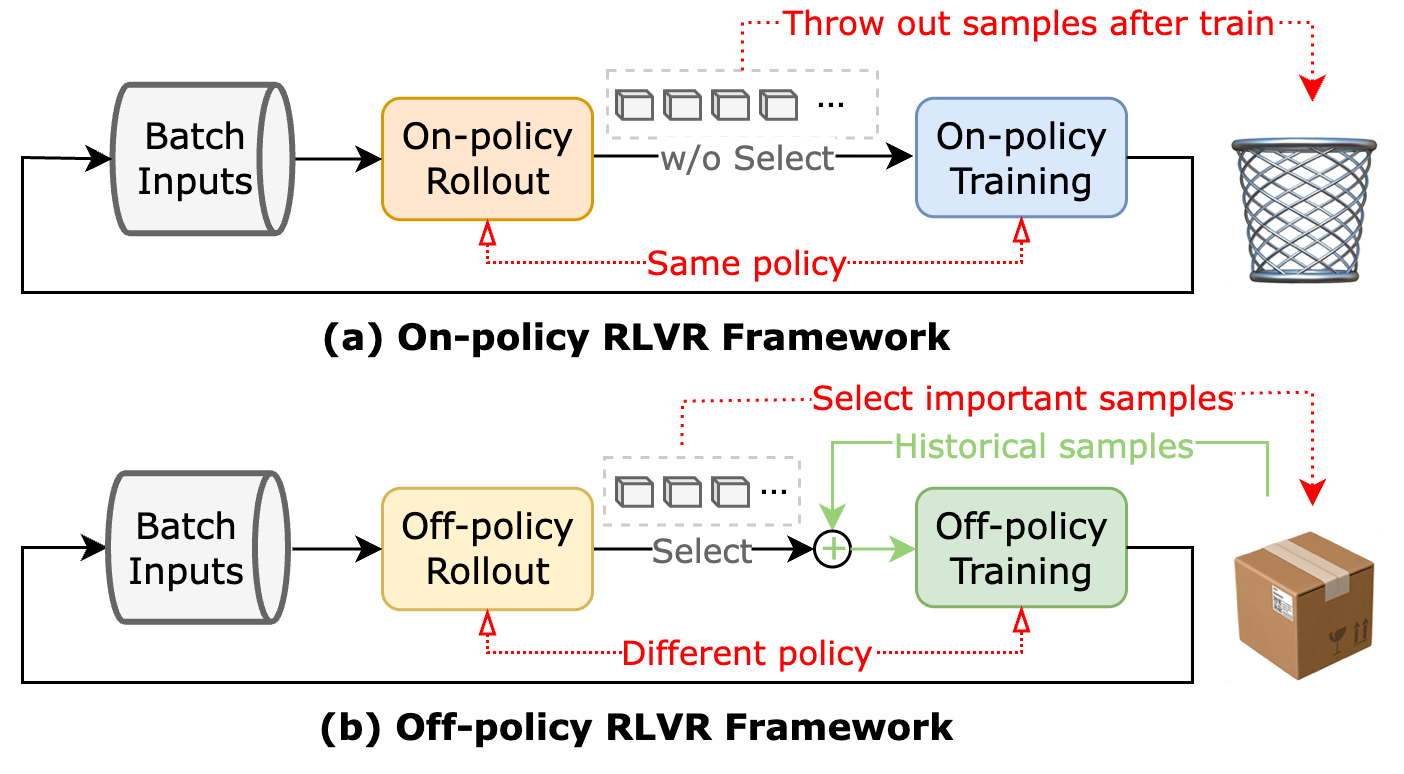} 
  \caption{The overview of the (a) on-policy and (b) off-policy RL Post-training framework}
  \vskip -0.1in
  \label{fig::Overview}
\end{wrapfigure}
\textbf{Off-policy Rollout} avoids exclusive reliance on the current training policy for generation, instead leveraging past policies or external guidance. For example, AReaL~\citep{fu2025areal} employs a fully asynchronous architecture that decouples generation from training, allowing rollout workers to use past policy. Mroueh et al.~\citep{mroueh2025revisiting} fix the rollout policy on the vLLM inference server for multiple iterations to ensure stable sample generation. LUFFY~\citep{yan2025learning} incorporates traces from stronger external policies to enhance reasoning capabilities beyond the model's initial limits. 

\textbf{Off-policy Training} uses replay buffers to manage samples from historical policies with varying activation strategies. ARPO~\citep{lu2025arpo} dynamically samples non-zero reward samples from the buffer only when current batches contain all-zero rewards. DOTS~\citep{sun2025improving} maintains a FIFO buffer that consistently reuses recent valid rollouts. RePO~\citep{li2025repo} mixes buffer samples with on-policy samples using diverse retrieval strategies. ReMix~\citep{liang2025squeeze} blends samples at fixed ratios while increasing the update-to-data ratio for efficiency. ReLIFT~\citep{ma2025learning} stores high-quality solutions to challenging problems in its buffer and refines them through interleaved supervised fine-tuning. Kimi K1.5~\citep{team2025Kimi} stores both complete and partial trajectories to reduce temporal correlations while maintaining computational efficiency.

However, most off-policy RLVR methods ignore the policy stability of experiences. Samples entering the buffer at different training steps may exhibit varying policy distributions. These discrepancies introduce excessive noise into policy learning, which in turn exacerbates training instability. More importantly, simply reusing historical samples may even hinder the policy's improvement. The high-accuracy historical samples may cause the model to overly focus on existing reasoning paths with high advantages, suppressing the model's exploration capability and resulting in premature convergence to suboptimal solutions~\citep{cui2025entropy}.

\section{Method}
In this section, we detail the core components of \textcolor{black}{BAPO}, particularly the adaptive construction strategy for the training batch, and provide a theoretical guarantee for the training stability of \textcolor{black}{BAPO}'s policy update. Figure~\ref{fig::framework} provides an overview of the off-policy rollout and training workflow.

\begin{figure}[htbp]
\centering
\includegraphics[width=1\textwidth]{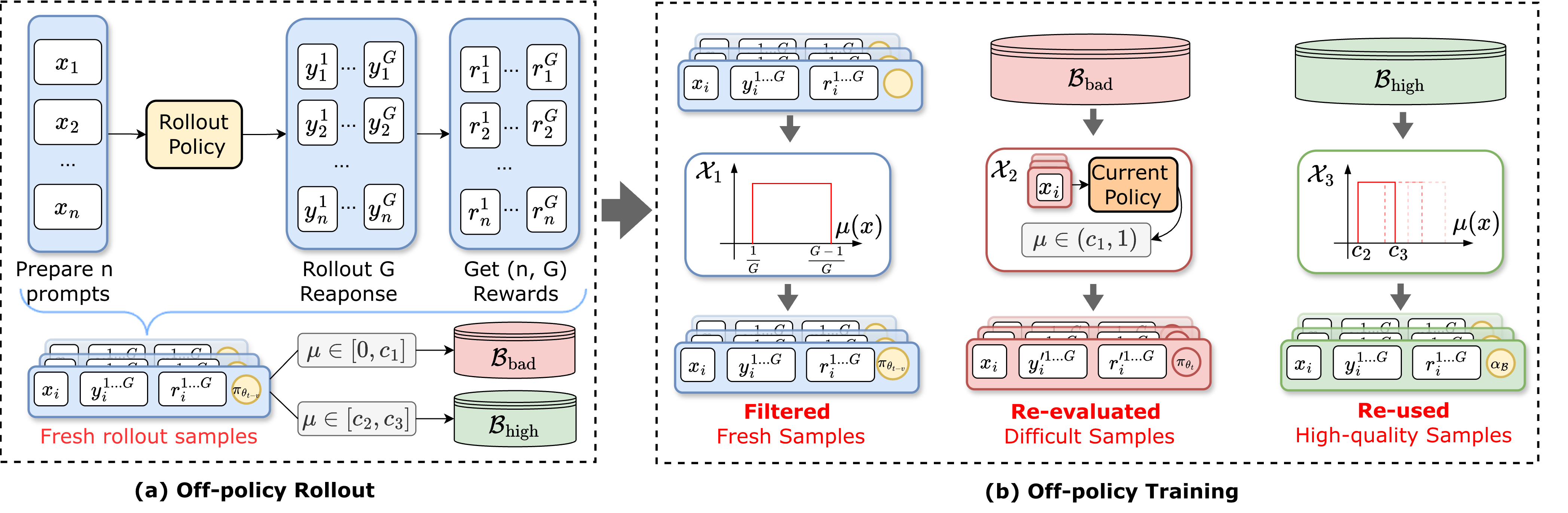}
\caption{The workflow of (a) off-policy rollout and (b) off-policy training in our RLVR framework}
\label{fig::framework}
\end{figure}

\subsection{Formal Definitions}
We first formalize our training objective $ \mathcal{L}_{\alpha}(\pi_{\theta})$ as a combination of online rollout-derived and historical buffer-derived contributions:
\begin{equation}
    \mathcal{L}_{\alpha}(\pi_{\theta})= \underbrace{\mathbb{E}_{(x,y) \sim \textcolor{black}{\alpha}} \left[ \rho_{\textcolor{black}{\alpha}}(\theta) \cdot \hat{A}(x,y) \right]}_{\text{Contribution from fresh samples}} + \underbrace{\mathbb{E}_{(x,y) \sim \textcolor{black}{\mathcal{B}}} \left[ \rho_{\textcolor{black}{\alpha_\mathcal{B}}}(\theta) \cdot \hat{A}(x,y) \right]}_{\text{Contribution from historical samples}} - \beta \cdot \mathbb{D}_{\text{KL}} (\pi_\theta \| \textcolor{black}{\alpha})
    \label{eq:objective}
\end{equation}
where \( (x,y) \sim \alpha \) refers to filtered online samples from the rollout policy \(  \alpha = \pi_{\theta_{t-v}} \) with \( v > 0 \) representing the delay timesteps. \( (x,y) \sim \mathcal{B} \) denotes historical samples from the replay buffer $\mathcal{B}$. The importance sampling ratios are defined as \( \rho_\alpha = \frac{\pi_\theta(y|x)}{\alpha(y|x)} \) for the online rollout samples and \( \rho_{\alpha_\mathcal{B}} = \frac{\pi_\theta(y|x)}{\alpha_{\mathcal{B}}(y|x)} \) for buffer samples, \( \alpha_{\mathcal{B}} \) is the historical rollout policies that generated the buffer. Each entity in the buffer ${\mathcal{B}}$ is formally defined as:
\begin{equation}
    \mathcal{B} = \{(u_i, \{x_{i,j}\}_{j=1}^{G}, \{y_{i,j}\}_{j=1}^{G}, \{r_{i,j}\}_{j=1}^{G}, \
    \{\alpha_{\mathcal{B}}(y_{i,j}|x_i)\}_{j=1}^{G})
    \}_{i=1}^{|\mathcal{B}|}
\end{equation}
where \( u_i \) is the unique identifier of each prompt, \( \{x_{i,j}\} \), \( \{y_{i,j}\} \), \( \{r_{i,j}\} \) represent the set of prompts, generated responses, and corresponding rewards, respectively. $\{\alpha_{\mathcal{B}}(y_{i,j}|x_i)\}_{j=1}^{G}$ is the rollout policy's probability, which is stored for calculating $\rho_{\alpha_{\mathcal{B}}}(\theta)$ when reusing, and $|\mathcal{B}|$ is the buffer size.

\subsection{Adaptive Training Batch Construction}
The core of off-policy RLVR lies in how to integrate historical experiences with online samples, to maintain non-homogeneous rewards and an appropriate difficulty distribution in each training step. For BAPO, we introduce \textbf{a filter function $I(x)$} in Definition~\ref{def:filtering_function} that decomposes the data selection criteria for each training step's batch into three parts.

\begin{definition}[Training Batch Filtering Function]
\label{def:filtering_function}
Define $\mu_{\pi,r}(x)=\mathbb{E}_{y\sim\pi(\cdot|x)}[r(x,y)]$ as the expected reward under policy $\pi$ for input $x$. The training batch indicator function $I:\mathcal{X}\rightarrow\{0,1\}$ is formulated as:
\begin{equation}
I(x) = \underbrace{1_{\{ \frac{1}{G} \le \mu_{\alpha,r}(x) \le \frac{G-1}{G} \}}}_{\text{\textcolor{black}{Filtered Fresh}}} + \underbrace{1_{\{\mu_{\alpha_{\mathcal{B}},r}(x)\le c_{1} \land \mu_{\pi_{\theta_t},r}(x)>c_{1}\}}}_{\text{Improved Historical Difficult}} + \underbrace{1_{\{c_{2}\le\mu_{\alpha_{\mathcal{B}},r}(x)\le c_{3}\}}}_{\text{Historical High-quality}}
\label{eq:filter_function}
\end{equation}
where $\alpha$ denotes the delayed rollout policy and $\alpha_{\mathcal{B}}$ denotes the policy associated with buffer samples. The function selects samples based on three criteria, yielding subsets $\mathcal{X}_{1}$, $\mathcal{X}_{2}$ and $\mathcal{X}_{3}$ respectively.
\end{definition}

Next, we explain the selection principles for $I(x)$ and derive three categories of samples, namely $\mathcal{X}_1$, $\mathcal{X}_2$, and $\mathcal{X}_3$, which are obtained from these three conditions, respectively.

\textbf{(1) Filtered Fresh Samples ($\mathcal{X}_1$).}
\textcolor{black}{To prevent gradient vanishing and maintain training stability, we filter the online rollout batch to exclude samples with zero variance. Specifically, we retain fresh samples where the group mean reward satisfies $\mu_{\alpha,r}(x) \in [\frac{1}{G}, \frac{G-1}{G}]$. While other filtering strategies (e.g., Gaussian sampling or uniform sampling) can be applied, we find that simple truncation sufficient for effective learning. A detailed discussion and comparison of different online filtering functions are provided in Appendix~\ref{sec::online_filter}.}

\textbf{(2) Improved Historical Difficult Samples ($\mathcal{X}_2$).}
Samples exhibiting extremely low group mean rewards, where $\mu_{\alpha,r}(x) \in [0, c_1]$, present significant challenges to the current policy and typically yield negligible policy improvement. However, as the model evolves, these historically difficult queries may eventually become tractable for a successor policy. To harness this, we \textit{periodically} re-generate responses using the current policy $\pi_{\theta_t}$ every $m$ training steps and construct the subset $\mathcal{X}_2$ based on the observable improvement.

Let $\mathcal{B}_{\text{bad}} \subseteq \mathcal{B}$ denote the buffer for difficult samples. \textcolor{black}{To manage the computational overhead associated with the re-evaluation process, we limit the buffer capacity $|\mathcal{B}_{\text{bad}}|$ to be equal to the training batch size. A First-In-First-Out (FIFO) mechanism is employed to automatically discard outdated samples when the buffer reaches capacity.} $\mathcal{X}_2$ is formulated as:
\begin{equation}
\mathcal{X}_2 = \left\{ (x, y') \mid (x, y) \in \mathcal{B}_{\text{bad}}, y' \sim \pi_{\theta_t}(\cdot \mid x), c_1 < \mu_{\pi_{\theta_t},r} (x) < 1 \right\}
\label{eq::x2}
\end{equation}
where $y'$ represents the new response generated by $\pi_{\theta_t}$, and we specifically select samples that show improvement such that $c_1 < \mu_{\pi_{\theta_t},r} (x) < 1$.

\textbf{(3) Reused Historical High-quality Samples ($\mathcal{X}_3$).}
To prevent underfilled batches caused by the scarcity of $\mathcal{X}_1$ and $\mathcal{X}_2$, we maintain a FIFO auxiliary buffer $\mathcal{B}_{\text{high}} \subseteq \mathcal{B}$. To mitigate training instability from stale data, $\mathcal{B}_{\text{high}}$ is restricted to high-quality trajectories from the three most recent steps. The subset $\mathcal{X}_3$ is randomly sampled to fill the remaining capacity:
\begin{equation}
\mathcal{X}3 = \mathcal{S}\left( \mathcal{B}_{{\text{high}}}, \min\left(|\mathcal{B}_{\text{high}}|, B - |\mathcal{X}_1|  - |\mathcal{X}_2| \right) \right)
\end{equation}
where $B$ is the configured training batch size and $\mathcal{S}(\cdot, k)$ denotes the random sampling of $k$ elements. Furthermore, to progressively master increasingly difficult tasks, we employ a linear mapping to shift the historical ``high-quality'' from easier to harder instances, scaling in accordance with the global average performance $r_{\text{tot}}$:
\begin{equation}
c_i = r_{\text{tot}} \cdot (c_i^{\text{high}} - c_i^{\text{low}}) + c_i^{\text{low}}, \quad i \in {2, 3}
\end{equation}

\subsection{Theoretical Analysis}
In this section, we further provide theoretical analysis in Theorem \ref{thm:policy_improvement} to establish BAPO's training stability based on ~\citep{mroueh2025revisiting}'s theorem. 
We show that, under certain assumptions, our constructed adaptive batches can consistently maintain guaranteed bounded policy improvement. 

\begin{theorem}[\textbf{Policy Improvement Lower Bound with Adaptive Training Batch}]
\label{thm:policy_improvement}
Assume rewards are bounded: $0 \leq r \leq 1$. Let $\pi_{\theta_t}$ be the current policy, $\alpha_1 = \pi_{\theta_{t-v}}$ be the delayed rollout policy, $\alpha_2 = \pi_{\theta_t}$ be the current policy for re-evaluation, $\alpha_3 = \alpha_{\mathcal{B}}$ be the buffer policy distribution, and $I(x)$ be the filtering function partitioning samples into $\mathcal{X}_1$, $\mathcal{X}_2$, and $\mathcal{X}_3$.

Suppose $c_1, c_2, c_3 \in (0,1)$ with $c_2 < c_3$, and the following TV distance constraints hold:
\begin{align}
\text{TV}(\pi_{\theta_t}(\cdot|x), \pi_{\theta_{t-v}}(\cdot|x)) &\leq \delta_1 \quad \forall x \in \mathcal{X}_1\\
\text{TV}(\pi_{\theta_t}(\cdot|x), \alpha_{\mathcal{B}}(\cdot|x)) &\leq \delta_3 \quad \forall x \in \mathcal{X}_3
\end{align}
where $\delta_1, \delta_3 > 0$ are sufficiently small such that the variance lower bounds remain positive.

Then, for the policy update objective in Equation~\ref{eq:objective}, the expected policy improvement over filtered samples satisfies:
$$\mathbb{E}_{x\sim\rho_\mathcal{X}}[I(x)(J(\pi_{\theta}(\cdot|x)) - J(\pi_{\theta_t}(\cdot|x)))] \geq \sum_{i=1}^{3} \mathcal{L}_i(\pi_{\theta}, \alpha_i)$$
where:
$$
J\big(\pi_{\theta}(\cdot \mid x)\big) \;=\; \mathbb{E}_{y \sim \pi_{\theta}(\cdot \mid x)} \, r(x, y)$$
$$
\mathcal{L}_i(\pi_{\theta}, \alpha_i) = \mathbb{E}_{x\in\mathcal{X}_i} \left[L_{\alpha_i}(\pi_{\theta}(\cdot|x)) - 2\textcolor{black}{K_i} \cdot \text{TV}(\pi_{\theta}(\cdot|x), \alpha_i(\cdot|x)) - 2\text{TV}(\pi_{\theta_t}(\cdot|x), \alpha_i(\cdot|x))\right]$$
with $L_{\alpha_i}(\pi_{\theta}(\cdot|x)) = \frac{1}{\sigma_{\alpha_i,r,\varepsilon}(x)}(J(\pi_{\theta}(\cdot|x))-J(\alpha_i(\cdot|x)))$. The constants are:
\begin{align}
K_1 &= \frac{1 - \sqrt{\frac{G-1}{G^2} + \varepsilon}}{\sqrt{\frac{G-1}{G^2} + \varepsilon}}\\
K_2 &= \frac{1 - \sqrt{c_1(1-c_1) + \varepsilon}}{\sqrt{c_1(1-c_1) + \varepsilon}}\\
K_3 &= \frac{1 - \sqrt{\min(c_2(1-c_2), c_3(1-c_3)) + \varepsilon}}{\sqrt{\min(c_2(1-c_2), c_3(1-c_3)) + \varepsilon}}
\end{align}
\end{theorem}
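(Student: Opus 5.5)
The plan is to reduce everything to a per-prompt inequality and then sum over the three disjoint pieces of the filter. Since $I(x)$ is a sum of three indicators defining $\mathcal{X}_1,\mathcal{X}_2,\mathcal{X}_3$, we write
\[
\mathbb{E}_{x\sim\rho_\mathcal{X}}\bigl[I(x)(J(\pi_\theta(\cdot|x))-J(\pi_{\theta_t}(\cdot|x)))\bigr]=\sum_{i=1}^{3}\mathbb{E}_{x\in\mathcal{X}_i}\bigl[J(\pi_\theta(\cdot|x))-J(\pi_{\theta_t}(\cdot|x))\bigr],
\]
with $\mathbb{E}_{x\in\mathcal{X}_i}$ read as the unnormalized restriction. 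It then suffices to show, for each $i$ and each $x\in\mathcal{X}_i$,
\[
J(\pi_\theta)-J(\pi_{\theta_t})\ \ge\ L_{\alpha_i}(\pi_\theta)-2K_i\,\text{TV}(\pi_\theta,\alpha_i)-2\,\text{TV}(\pi_{\theta_t},\alpha_i),
\]
with the conditioning on $x$ suppressed.

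\textbf{Step 1 (the single-prompt bound, following Mroueh et al.).} Split the difference as
\[
J(\pi_\theta)-J(\pi_{\theta_t})=\bigl[J(\pi_\theta)-J(\alpha_i)\bigr]+\bigl[J(\alpha_i)-J(\pi_{\theta_t})\bigr].
\]
Because $0\le r\le1$, the second bracket is at least $-\text{TV}(\pi_{\theta_t},\alpha_i)$, and hence at least $-2\,\text{TV}(\pi_{\theta_t},\alpha_i)$. For the first bracket, write $D=J(\pi_\theta)-J(\alpha_i)$ and $\sigma=\sigma_{\alpha_i,r,\varepsilon}(x)$, so that $D=L_{\alpha_i}(\pi_\theta)-D\,(1-\sigma)/\sigma$. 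Using $|D|\le 2\,\text{TV}(\pi_\theta,\alpha_i)$, which again follows from $r\in[0,1]$, and $\sigma\le 1$ (requiring $\mu(1-\mu)+\varepsilon\le1$ for $\varepsilon$ small), we get
\[
D\ \ge\ L_{\alpha_i}(\pi_\theta)-2\,\frac{1-\sigma}{\sigma}\,\text{TV}(\pi_\theta,\alpha_i).
\]
The map $\sigma\mapsto(1-\sigma)/\sigma$ is decreasing, so replacing $\sigma$ by any lower bound $\underline\sigma_i$ valid on $\mathcal{X}_i$ yields the constant $K_i=(1-\underline\sigma_i)/\underline\sigma_i$.

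\textbf{Step 2 (variance lower bounds on each subset).} Since rewards are Bernoulli, $\sigma^2_{\alpha,r,\varepsilon}(x)=\mu(1-\mu)+\varepsilon$, and $\mu(1-\mu)$ is concave, so its minimum over an interval is attained at an endpoint.
\begin{itemize}
\item On $\mathcal{X}_1$, $\mu_{\alpha_1,r}\in[\tfrac1G,\tfrac{G-1}{G}]$ gives $\mu(1-\mu)\ge (G-1)/G^2$, which is $K_1$.
\item On $\mathcal{X}_2$ the relevant rollout is $\alpha_2=\pi_{\theta_t}$ with $c_1<\mu<1$. Concavity on $(c_1,1)$ gives no positive lower bound near $1$, so I would argue that the endpoint $c_1$ governs. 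Either restrict to $\mu\le 1-c_1$, or note that the selection in \eqref{eq::x2} uses the empirical group mean, so $\mu\le (G-1)/G$, and assume $c_1\le 1/2$ with $c_1(1-c_1)\le (G-1)/G^2$ as the paper's choice of constants implicitly does. This gives $K_2$. Here the second TV term vanishes since $\alpha_2=\pi_{\theta_t}$.
\item On $\mathcal{X}_3$, $\mu_{\alpha_{\mathcal{B}},r}\in[c_2,c_3]$ gives $\min(c_2(1-c_2),c_3(1-c_3))$, which is $K_3$.
\end{itemize}
The TV assumptions with small $\delta_1,\delta_3$ ensure that these bounds, stated for the rollout policies, keep $\sigma$ bounded away from zero, as the statement's positivity clause requires. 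They also make the $-2\,\text{TV}(\pi_{\theta_t},\alpha_i)$ terms at most $2\delta_i$.

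\textbf{Step 3.} Sum the three per-subset inequalities, taking expectations over $x\in\mathcal{X}_i$, to obtain $\sum_i\mathcal{L}_i$.

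The main obstacle is Step 2 for $\mathcal{X}_2$: the stated condition $c_1<\mu<1$ does not by itself lower-bound the variance by $c_1(1-c_1)$. I would resolve this by using the group-level truncation $\mu\le (G-1)/G$ together with a mild assumption on $c_1$ (e.g.\ $c_1\le 1/G$ or $c_1\le\tfrac12$ and $1-c_1\ge (G-1)/G$), and state that assumption explicitly. A secondary point is overlap among the $\mathcal{X}_i$, where $I(x)$ could exceed $1$. The decomposition is exact as a sum of indicators regardless, so overlapping prompts are simply counted once per subset on both sides.
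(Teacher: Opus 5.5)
Your proposal follows the paper's proof essentially step for step. It uses the same split of $I(x)$ into three indicators. It uses the same algebraic identity $J(\pi_\theta)-J(\alpha_i)=L_{\alpha_i}(\pi_\theta)-\frac{1-\sigma}{\sigma}\bigl(J(\pi_\theta)-J(\alpha_i)\bigr)$, closed off with $|J(\cdot)-J(\alpha_i)|\le 2\,\text{TV}$; the paper gets this from the Kantorovich--Rubinstein form of TV, you get it directly from $r\in[0,1]$. It then minimizes $\mu(1-\mu)$ at the endpoints of each subset and sums, exactly as the paper does.

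The one place you diverge, $\mathcal{X}_2$, is where the paper's argument is actually wrong and yours is right. The paper simply asserts $\sigma^2_{\alpha_2,r}(x)>c_1(1-c_1)$ from $c_1<\mu_{\pi_{\theta_t},r}(x)<1$, which fails as soon as $\mu>1-c_1$. The statement as written is in fact false without an extra hypothesis. Put $\rho_{\mathcal{X}}$ on a single prompt lying only in $\mathcal{X}_2$, e.g.\ with $\mu_{\alpha_1,r}(x)=\mu_{\alpha_{\mathcal{B}},r}(x)=0$ (so the TV hypotheses are vacuous), and set $\mu_{\pi_{\theta_t},r}(x)=1-\eta$. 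Let $\pi_\theta$ be $\pi_{\theta_t}$ conditioned on the correct responses. Then the left side is $\eta$ and $\text{TV}(\pi_\theta,\pi_{\theta_t})=\eta$. The right side is $\eta\,(1/\sigma-2K_2)$ with $\sigma=\sqrt{\eta(1-\eta)+\varepsilon}$, and this exceeds $\eta$ once $\eta$ and $\varepsilon$ are small, because $K_2$ stays bounded. So your added assumption is necessary, not a weakness. Your version, $c_1\le 1/G$ together with $\hat\mu\le (G-1)/G$, is satisfied by the paper's own setting $c_1=1/8$, $G=8$.

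Two small points. First, option (b) bounds the \emph{empirical} group mean. It therefore controls $\sigma_{\alpha_2,r,\varepsilon}(x)$ only if you read $\sigma$ as the standard deviation of those same $G$ samples. That reading is legitimate, since your Step 1 identity holds for any $\sigma\in(0,1]$, but then all three filters should be read empirically for consistency. If $\sigma$ is the population standard deviation, you need option (a), $\mu\le 1-c_1$. Second, the $\delta_1,\delta_3$ hypotheses are not used anywhere, in your argument or the paper's. Each filter constrains $\mu_{\alpha_i,r}$ for the very $\alpha_i$ that appears in $\sigma_{\alpha_i,r,\varepsilon}$, so no TV argument is needed to keep $\sigma$ away from zero. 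The TV hypotheses only make the $-2\,\text{TV}(\pi_{\theta_t},\alpha_i)$ terms small.
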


More importantly, we highlight several properties from this theorem:

\textbf{Bounded Stability.} All constants $K_1$, $K_2$, and $K_3$ are finite positive values, which guarantee that the training process remains numerically stable and theoretically bounded.


\textbf{Off-policy Tolerance.} \textcolor{black}{The stability of trust-region methods inherently constrain the magnitude of single-step policy updates. Consequently, the divergence between the current policy $\pi_{\theta_t}$ and the delayed rollout policy $\alpha$ remains bounded over short intervals. Furthermore, the strict FIFO mechanism with limited buffer capacity ensures that only samples from recent policies are retained, thereby maintaining policy consistency within the training batch.}

\section{Experimental Setup}
To comprehensively evaluate the effectiveness of our off-policy RLVR framework, we conduct extensive experiments across different tasks and backbones, following the experimental setup described in ~\citep{qu2025can}.

First, we select three representative reasoning tasks, as detailed below:

\textbf{Mathematics.} Following prior work~\citep{luo2025deepscaler}, we use the DeepSeek R1 Distilled 1.5B~\citep{guo2025deepseek} and Qwen3 8B~\citep{yang2025qwen3} as the base model, and conducted post-training on the DeepScaleR-Preview-Dataset~\citep{aggarwal2025l1}, which contains 40 thousand question-answer pairs sourced from several mathematics competitions. Evaluation is performed on multiple mathematics benchmarks, including AIME24, AMC23, MATH500 ~\citep{hendrycks2021measuring}, Minerva Math (Minerva)~\citep{lewkowycz2022solving}, and OlympiadBench (Olympiad)\cite{he2024olympiadbench}. 

\textbf{Planning.} We choose Qwen2.5 Math 1.5B and 7B ~\citep{yang2024qwen2} as the backbone, and adopted the Countdown Number Game as the specific task. For training, we used a 10,000-problem subset of the Countdown-34 dataset, where each problem provides 3-4 source numbers. Evaluation was conducted on two variants: Countdown-3to4 (CD-34) test set using a 200-problem held-out split, and the more challenging Countdown-4 (CD-4) test set with 200 problems that consistently provide four source numbers ~\citep{chen2025self}.

\textbf{Visual Geometry.} We train Qwen2.5 VL 3B and 7B ~\citep{bai2025qwen2} on the 2,101-problem training split of the Geometry3K dataset~\citep{lu2021inter}, where each problem consists of a geometric diagram paired with a natural language question requiring spatial and logical reasoning. Evaluation was performed on the official 300-problem validation split (Geo-3K val) and 601-problem test split of Geometry3K (Geo-3K test). 

Besides, we select several on-policy and off-policy RLVR frameworks as baselines:

\textbf{On-policy.} We select GRPO~\citep{shao2024deepseekmath}, DAPO~\citep{yu2025dapo}, and MoPPS~\citep{qu2025can} as representative on-policy RLVR methods. GRPO is the first to integrate group-relative advantage estimation into the RLVR framework, while DAPO further improves training stability and efficiency. MoPPS incorporates difficulty-aware prediction into prompt selection.

\textbf{Off-policy.} We compare our approach with three representative off-policy methods: GRPO ($v=5$)~\citep{mroueh2025revisiting}, RePO~\citep{li2025repo}, and Remix-GRPO~\citep{liang2025squeeze}. Specifically, GRPO ($v=5$) delays the rollout policy with a frequency of 5, whereas RePO and Remix-GRPO adopt diverse replay strategies to retrieve off-policy samples from a replay buffer.

\textbf{Implementation Details.} All comparative experiments were run on 8 A100 GPUs with 80GB memory based on the Verl framework~\citep{sheng2025hybridflow}. Identical parameters were used to ensure fair comparison, with specific details in Appendix~\ref{app:hyper}. 


\section{Results Analysis}
\label{sec:Result}

\subsection{Main Results}
We evaluate BAPO across three reasoning tasks to demonstrate its broad applicability. Experimental results show that BAPO consistently outperforms existing baselines throughout training (Figure~\ref{fig::train_tracking}) and testing (Figure~\ref{fig::test_tracking}). Notably, in mathematical tasks, the GRPO baseline exhibits severe training instability, as evidenced by significant oscillations in its early-stage training curve. This is attributed to the high variance in problem difficulty within the DeepScalerR dataset. Under the same settings, BAPO achieves smoother convergence and higher reward bounds.

In Tables~\ref{tab:combined_results}, BAPO achieves an \textbf{average 12.5\% accuracy improvement} over baselines. Crucially, while DAPO approaches BAPO's performance in some metrics, it requires approximately \textbf{2.5$\times$ more rollouts} (as visualized in Figure~\ref{fig::dapo_roll}), imposing a substantial computational burden.


\begin{figure}[htbp]
\centering
\includegraphics[width=1\textwidth]{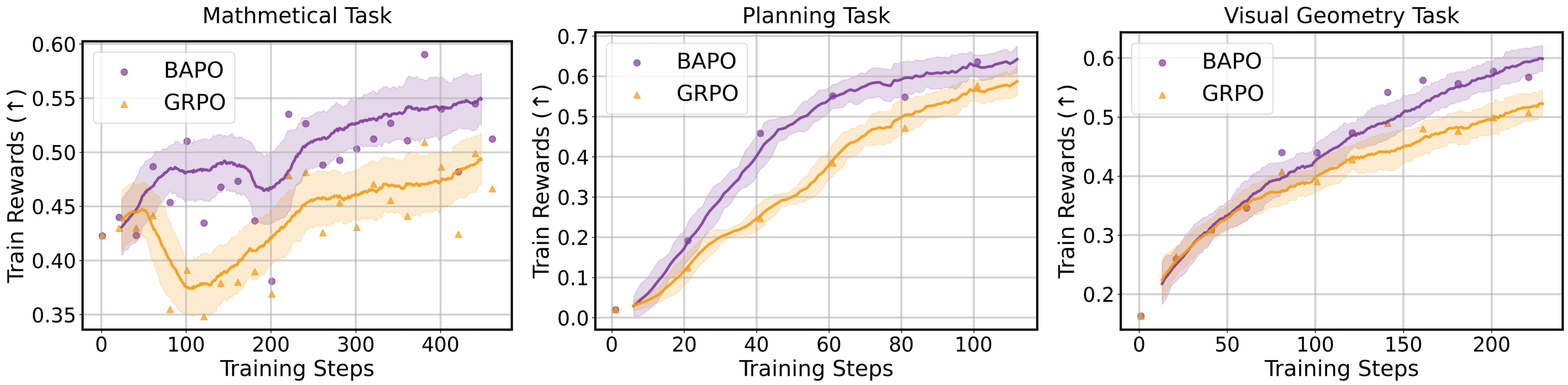}
\caption{\textbf{Training Curves of Reward Changes} for mathematics, planning, and geometry tasks using DeepSeek Distilled Qwen 1.5B, Qwen2.5 Math 1.5B, and Qwen2.5 VL 3B, respectively.}
\label{fig::train_tracking}
\end{figure}

\begin{table}[htbp]
\centering
\caption{\textbf{Comprehensive Evaluation Results.} '\textbf{+}' indicates fine-tuning via the corresponding method. Accuracy is averaged over 32 runs. The \textbf{bold} value denotes the top result, and the \underline{underlined} value denotes the second-top result.}
\label{tab:combined_results}

\resizebox{\textwidth}{!}{
\begin{tabular}{r c c c c c c c c}
\toprule
\multicolumn{9}{c}{\textbf{(a) Mathematics Benchmarks}} \\
\midrule
\textbf{Method} & \textbf{AIME24} & \textbf{AMC} & \textbf{MATH500} & \textbf{Minerva.} & \textbf{Olympiad.} & \textbf{Avg. $\uparrow$} & \textbf{Rollouts $\downarrow$} & \textbf{Type}\\
\midrule
\textbf{DeepSeek R1 Distill Qwen \textcolor{black}{1.5B}} & 28.80 & 62.90 & 82.80 & 26.50 & 44.42 & 48.90 & - & - \\
\textbf{+GRPO~\citep{guo2025deepseek}} 
& 30.73 & 67.47 & 85.40 & 28.95 & 45.33 & 51.58 & \textbf{677k} & \texttt{on} \\
\textbf{+DAPO~\citep{yu2025dapo}} 
& \underline{35.73} & \underline{70.08} & {86.05} & \textbf{30.70} & \underline{48.48} & \underline{54.20} & 1921k & \texttt{on} \\
\textbf{+MoPPS$^{*}$~\citep{qu2025can}} & {33.33} & 65.29 & 84.94 & 28.88 & 45.93 & 51.67 & 737k & \texttt{on} \\
\textbf{+GRPO ($v=5$)~\citep{mroueh2025revisiting}} & 30.49 & 65.09 & \underline{86.72} & 28.16 & 46.18 & 51.57 & \textbf{677k} & \texttt{off} \\
\textbf{+RePO~\citep{li2025repo}} & 30.42 & 64.76 & 83.75 & 28.33 & 45.44 & 50.54 & \textbf{677k} & \texttt{off} \\
\textbf{+Remix-GRPO$^{*}$~\citep{liang2025squeeze}} & {33.33} & 65.06 & 84.60 & 26.10 & 43.55 & 50.53 & - & \texttt{off} \\
\rowcolor{yellow!20} \textbf{+BAPO (Ours)} & \textbf{38.54} & \textbf{72.74} & \textbf{89.18} & \underline{29.55} & \textbf{50.06} & \textbf{56.01} & \underline{733k} & \texttt{off} \\
\bottomrule
\end{tabular}
}

\vspace{2mm} 

\resizebox{\textwidth}{!}{
\begin{tabular}{rccc|rccc}
\toprule
\multicolumn{8}{c}{\textbf{(b) Planning and Visual Geometry Benchmarks}} \\
\midrule
\textbf{Method} & \textbf{CD-34} & \textbf{CD-4} & \textbf{Avg} & \textbf{Method} &\textbf{Geo-3K(val)} & \textbf{Geo-3K(test)} & \textbf{Avg} \\
\midrule
\textbf{Qwen2.5 Math \textcolor{black}{1.5B}} & 1.12 & 0.37 & 0.75 & \textbf{Qwen2.5 VL \textcolor{black}{3B}} & 14.77 & 19.18 & 16.98 \\
\quad \textbf{+GRPO~\citep{guo2025deepseek}} & 62.94 & 35.88 & 49.41 & \quad \textbf{+GRPO~\citep{guo2025deepseek}} & \underline{36.44} & {43.12} & {39.78} \\
\quad \textbf{+DAPO~\citep{yu2025dapo}}  & \underline{70.56} & \underline{45.87} & \underline{58.22} & \quad \textbf{+DAPO~\citep{yu2025dapo}}  & \textbf{40.11} & \underline{45.18} & \underline{42.65} \\
\rowcolor{gray!10} \textbf{+BAPO w/o $\mathcal{X}_2$ (Ours)} & 60.31 & 35.31 & 47.81 &  \textbf{+BAPO w/o $\mathcal{X}_2$ (Ours)}  & 30.57 & 36.92 & 33.75 \\
\rowcolor{gray!10} \textbf{+BAPO w/o $\mathcal{X}_3$ (Ours)} & {64.43} & {38.75} & {51.59} & \textbf{+BAPO w/o $\mathcal{X}_3$ (Ours)}  & 32.22 & 39.79 & 36.01 \\
\rowcolor{yellow!20} \quad \textbf{+BAPO (Ours)} & \textbf{73.00} & \textbf{47.50} & \textbf{60.25} & \quad \textbf{+BAPO (Ours)} & \textbf{40.11} & \textbf{46.33} & \textbf{43.22} \\
\midrule
\textbf{Qwen2.5 Math \textcolor{black}{7B}} & 2.68 & 0.94 & 1.81 & \textbf{Qwen2.5 VL \textcolor{black}{7B}} & 30.40 & 36.10 & 33.25 \\
\quad \textbf{+GRPO~\citep{guo2025deepseek}}  & {70.75} & 50.25 & 60.50 & \quad \textbf{+GRPO~\citep{guo2025deepseek}}  & 40.79 & \underline{47.15} & \underline{43.97} \\
\quad \textbf{+DAPO~\citep{yu2025dapo}}   & \underline{78.75} & \textbf{57.43} & \underline{68.09} & \quad \textbf{+DAPO~\citep{yu2025dapo}}   & \underline{40.87} & 47.02 & 43.95 \\
\rowcolor{yellow!20} \quad \textbf{+BAPO (Ours)} & \textbf{79.13} & \underline{57.13} & \textbf{68.13} & \quad \textbf{+BAPO (Ours)} & \textbf{41.89} & \textbf{48.77} & \textbf{45.33} \\
\bottomrule
\end{tabular}
}
\makebox[\textwidth][r]{\tiny *This method's performance is taken from the corresponding paper.}
\end{table}

\subsection{Mechanism Analysis}
To deeply investigate whether BAPO's success stems from sensitive hyperparameter tuning or its core batch reconstruction mechanism, we conducted both \textbf{Minimalist Verification} and \textbf{Hyperparameter Robustness} experiments.
\begin{tcolorbox}[colback=gray!10,colframe=gray!50,title=\textbf{Off-policy Components $>$ Off-policy Hyperparameters}]
The performance gains of BAPO primarily stem from the structural logic of its off-policy components rather than specific hyperparameter settings. The framework remains effective even under rigid, parameter-free conditions.
\end{tcolorbox}
\begin{figure}[htbp]
\centering
\includegraphics[width=1\textwidth]{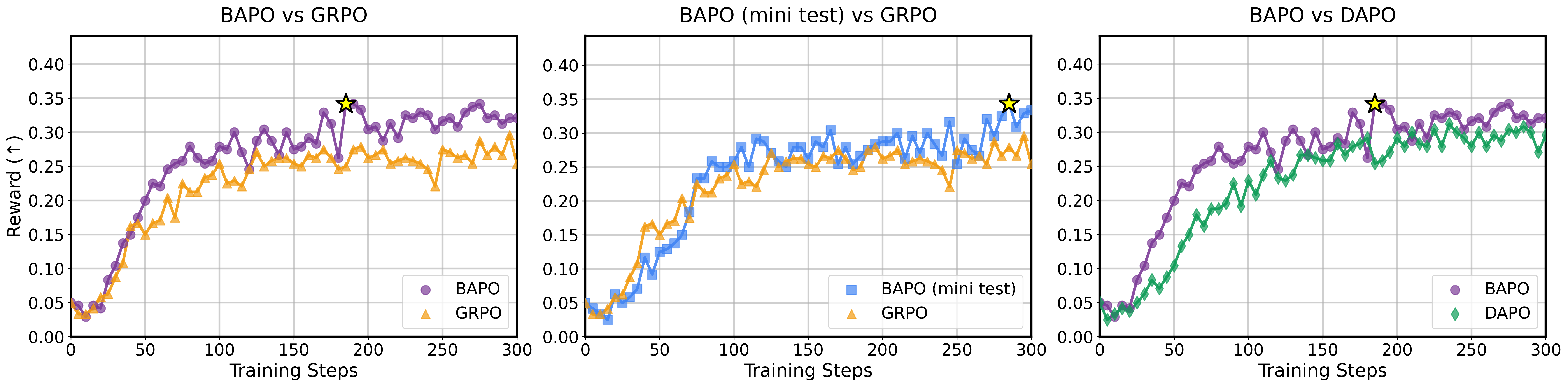}
\caption{\textcolor{black}{\textbf{Test Curves of Group Accuracy Changes} on AIME for different RLVR methods based on\textbf{ Qwen3 8B}. Left: Standard BAPO vs. GRPO. Medium:\textbf{ BAPO (mini test) vs. GRPO}. Right: Standard BAPO vs. DAPO.}}
\label{fig::mini_test}
\end{figure}
\textbf{Minimalist Verification.}
\textcolor{black}{To validate the theoretical implications of Theorem~\ref{thm:policy_improvement} without relying on hyperparameter engineering, specifically avoiding the tuning of thresholds $c_1, c_2, c_3$ and update frequencies, we devised a \textbf{``Mini-test''} experiment. We trained Qwen3 8B on the mathematics task under 4K length constraints using a stripped-down, parameter-free BAPO logic for constructing training batch:}

\textcolor{black}{\textbf{$\mathcal{X}_1$:} We apply strictly standard zero-advantage filtering, removing only the prompts where all $G$ responses are entirely correct or entirely wrong.}

\textcolor{black}{\textbf{$\mathcal{X}_2$:} We replay historical \textit{all-wrong} samples ($\mu_{\alpha, r}(x)=0$). These correspond exactly to the difficult cases discarded by $\mathcal{X}_1$, creating a closed-loop system that recovers waste data without requiring a difficulty threshold $c_1$.}

\textcolor{black}{\textbf{$\mathcal{X}_3$:} Instead of a dynamic accuracy range, we reuse historical samples with exactly \textbf{50\% accuracy}. As formally proven in Proposition~\ref{thm:proposition}, samples with accuracy $\mu_{\alpha, r}(x)=\frac{1}{2}$ maximize the reward variance, thereby providing the theoretical maximum potential for single-step policy improvement $J(\pi_\theta) - J(\pi_{\theta_t})$.}

The results in Figure~\ref{fig::mini_test} demonstrate that even in the hyperparameter-free ``Mini-test'', BAPO maintains a clear advantage over GRPO. This confirms that the structural introduction of $\mathcal{X}_2$ and $\mathcal{X}_3$ drives the performance, not the specific tuning of $c$ values.

\textbf{Component Efficacy.} To evaluate the contribution of re-evaluated difficult samples $\mathcal{X}_2$ and reused high-quality samples $\mathcal{X}_3$, we conduct ablation studies shown in Table~\ref{tab:combined_results} and Figure~\ref{fig::ablation} (Column 2). Both components are essential: removing $\mathcal{X}_2$ causes a $\sim$21\% performance drop, underscoring the importance of explicitly targeting difficult samples.

\textbf{Hyperparameter Robustness.}
We further evaluate the sensitivity of BAPO to its key hyperparameters: rollout delay $v$, re-rollout frequency $m$, and difficulty thresholds.
\textbf{Frequency ($v, m$):} As shown in Figure~\ref{fig::ablation} (Column 1), performance remains stable within reasonable ranges (e.g., $v=5, m=5$). Extreme delays only degrade performance when policy divergence becomes excessive, aligning with our theoretical analysis regarding the trust region.
\textbf{Difficulty Thresholds ($c_2, c_3$):} While our adaptive boundary mechanism yields the best convergence, Figure~\ref{fig::ablation} (Column 3) shows that using fixed ranges still significantly outperforms baselines. \textcolor{black}{This indicates that the \textit{presence} of diverse historical data is more critical than the precise values of the thresholds.}

\begin{figure}[htbp]
\centering
\includegraphics[width=1\textwidth]{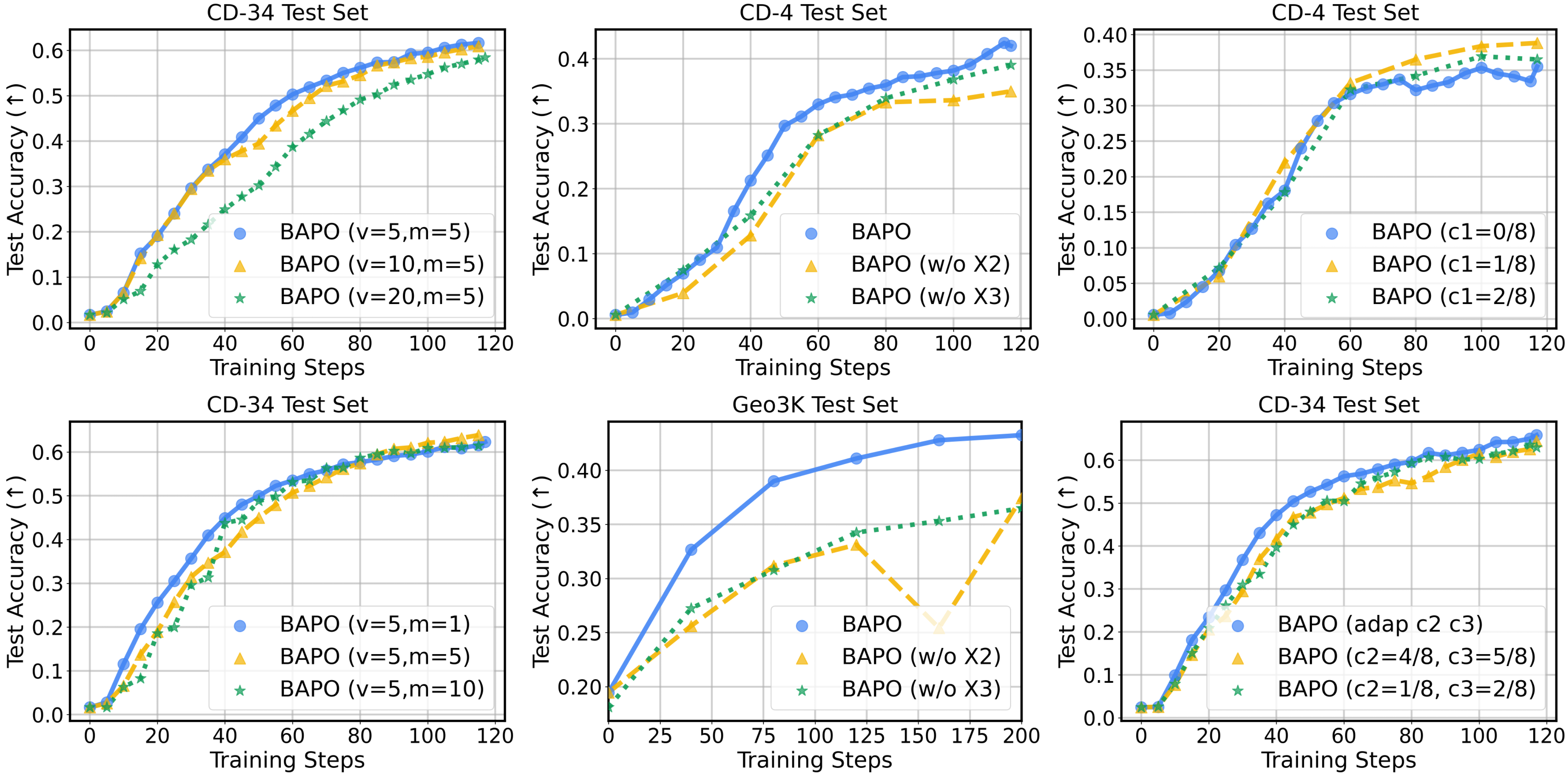}
\caption{\textbf{Ablation Studies} for BAPO. The first column presents ablations on frequency-related hyperparameters ($m, v$). The second column shows ablations on buffer subsets ($\mathcal{X}_2, \mathcal{X}_3$). The third column compares fixed vs. adaptive difficulty thresholds.}  
\label{fig::ablation}
\end{figure}

\subsection{Detailed Analysis}
We analyze BAPO's internal mechanisms below. For extended analysis on training dynamics, computation, and visualization, please refer to \textbf{Appendices~\ref{app:Dynamics}, ~\ref{app:Computation} and ~\ref{app:Visualization}}.

\textbf{Tracking Difficult Samples.} We visualize the training dynamics in Figure~\ref{fig::eval_tracking}. BAPO exhibits a superior capability to "unlock" difficult problems: after 3 epochs, BAPO successfully improves \textbf{31\%} of the samples that were initially unsolvable ($0/8$ accuracy), compared to only \textbf{19\%} for GRPO.

\begin{figure}[htbp]
\centering
\includegraphics[width=1\textwidth]{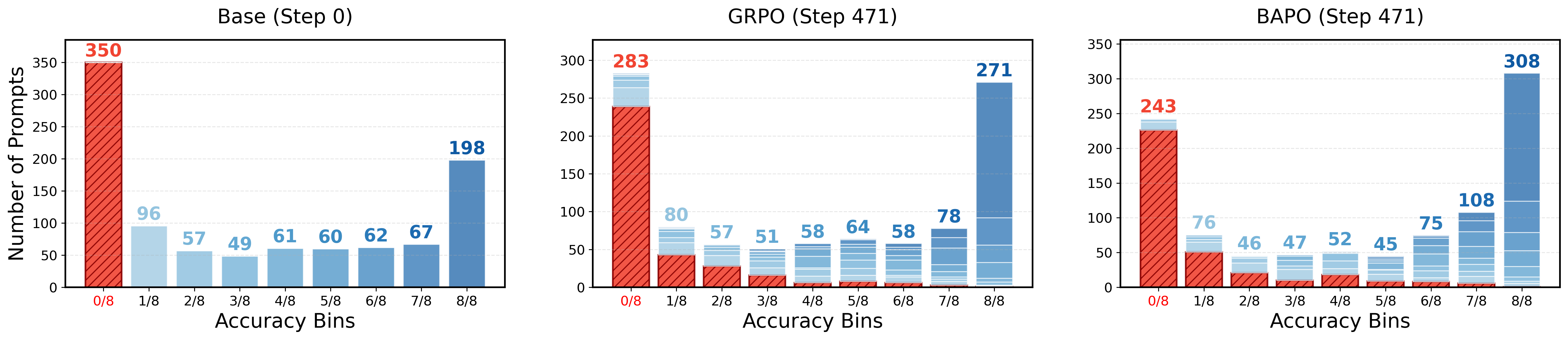}
\caption{Tracking changes in the \textbf{Number of Different Accuracy Bins} on the DeepScalerR training subset. Special attention is paid to the reduction of bad samples (red bars).}
\label{fig::eval_tracking}
\end{figure}

\textbf{Sample Distribution \& Efficiency.} 
\textcolor{black}{To uncover the source of BAPO's efficiency, we analyze the dynamic batch construction in Figure~\ref{fig::batch_composition} alongside the rollout costs in Figure~\ref{fig::dapo_roll}.}

\textcolor{black}{As observed in Figure~\ref{fig::batch_composition}, the assembled training batch size frequently fluctuates below the maximum configured capacity. This reduction in backward propagation load effectively offsets the computational overhead caused by off-policy re-evaluation and log-probability re-computation. Consequently, as detailed in Table.~\ref{tab:computational_overhead}, BAPO maintains a training speed comparable to GRPO while requiring significantly fewer rollouts than DAPO, achieving a superior trade-off between convergence performance and computational cost.}

\begin{figure}[htbp]
\centering
\includegraphics[width=1\textwidth]{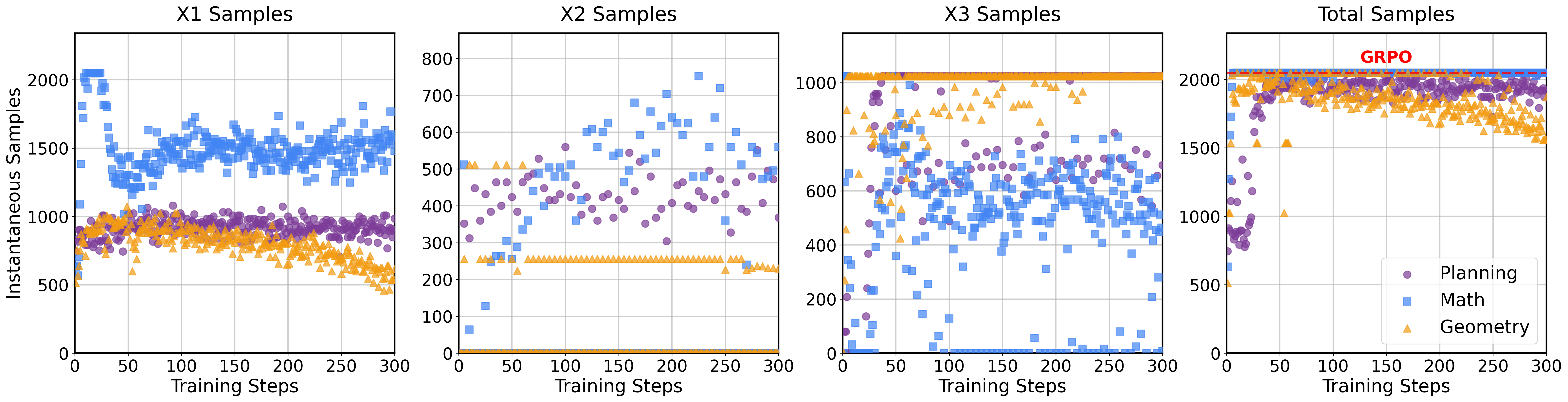} 
\caption{\textcolor{black}{\textbf{Dynamic Sample Distribution.} The composition of BAPO's $\mathcal{X}_1, \mathcal{X}_2, \mathcal{X}_3$ and the total samples compared to the fixed GRPO batch size (Red line).}}
\label{fig::batch_composition}
\end{figure}

\begin{figure}[htbp]
\centering
\includegraphics[width=1\textwidth]{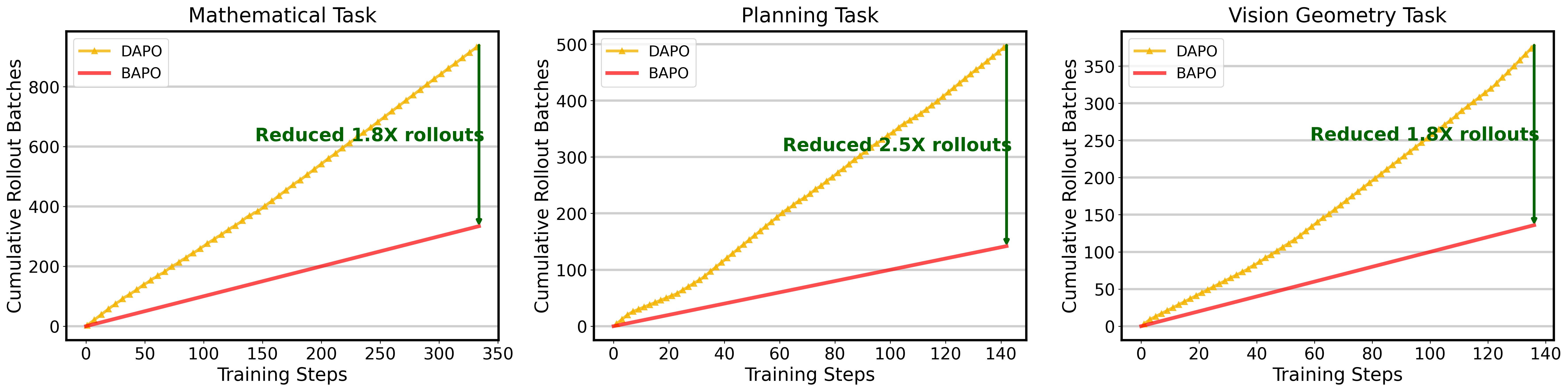}
\caption{\textbf{Cumulative Rollout Batches Comparison between BAPO and DAPO.} The maximum rollout time for DAPO is set to 4.}
\label{fig::dapo_roll}
\end{figure}

\begin{tcolorbox}[colback=gray!10,colframe=gray!50,title=\textbf{Efficient Batch Adaptation}]
BAPO maintains training efficiency comparable to GRPO. While the periodic re-evaluation of $\mathcal{X}_2$ introduces additional generation overhead, this cost is effectively offset by reduced training samples, particularly during the initial stages of training.
\end{tcolorbox}

\section{Conclusion}
In this paper, we propose BAPO, an off-policy RLVR framework for LLM post-training. It aims to utilize historical training data better and thereby improve training efficiency. Specifically, we appropriately delay the rollout policy to stabilize the policy discrepancies of buffer samples. More importantly, we construct training batches by re-evaluating difficult samples and reusing historical high-quality ones, thereby enhancing the efficiency of post-training. We validate the strong adaptability of the BAPO framework through experiments on three distinct reasoning tasks using different LLM backbones, and the results demonstrate that BAPO significantly outperforms baselines in both convergence performance and training efficiency.
Nevertheless, exploring how to adapt BAPO to large models with MoE architectures, as well as to agentic RL frameworks, remains a significant challenge.

\clearpage
\section*{Acknowledgements}
This work was supported by the National Natural Science Foundation of China under Grant 72571007 and Grant 72595830/72595831, and by Beijing Nova Program (No. 20250484850).

\section*{Ethics Statement}
All authors of this study strictly adhere to the ICLR code of ethics. Our research does not involve any potential conflicts of interest or sponsorship issues. We have carefully considered and addressed concerns related to discrimination, bias, and fairness in our methodology. The study raises no privacy or security concerns, maintains full legal compliance, and upholds the highest standards of research integrity. All experimental procedures and data handling practices follow established ethical guidelines for machine learning research.

\section*{Reproducibility statement}
To ensure full reproducibility of our results, we provide comprehensive implementation details of the proposed BAPO training algorithm in the supplementary materials. All experimental settings, hyperparameters, and dataset specifications are clearly documented. For our theoretical contributions, complete proofs and clear explanations of all assumptions are included in the appendix. Code and data will be made available upon acceptance to facilitate replication of our findings.

\section*{The Use of Large Language Models}
In this research, we employed LLMs solely as language editing tools to improve the clarity and readability of our manuscript. LLMs were used for grammar checking, style refinement, and language polishing purposes only. All core research ideas, experimental design, analysis, and conclusions are entirely the original work of the authors. The use of LLMs did not contribute to the conceptual or technical content of this study.

\bibliography{iclr2026_conference}
\bibliographystyle{iclr2026_conference}

\clearpage
\appendix
\section{Appendix}

\subsection{Glossary of Terms and Notations}

\begin{table}[h]
\centering
\begin{tabular}{p{3cm}p{10cm}}
\toprule
\textbf{Term} & \textbf{Definition} \\
\midrule
$c_1, c_2, c_3$ & Thresholds for classifying historical samples by difficulty (group mean reward). \\
$\mathcal{X}_1, \mathcal{X}_2, \mathcal{X}_3$ & Subsets of training batch: fresh, re-evaluated difficult, and historical high-quality samples. \\
$m$ & Re-evaluation frequency for historically difficult samples. \\
$v$ & Delay steps for updating the rollout policy. \\
$G$ & Group size, number of responses generated per prompt during rollout. \\
$\mathcal{B}$ & Replay buffer storing historical samples. \\
$\hat{A}_{i,t}$ & Estimated advantage for token $t$ in response $i$. \\
$\varepsilon$ & Clipping parameter in PPO-style objectives. \\
$\beta$ & Coefficient for KL penalty in the objective function. \\
$I(x)$ & Filter function for constructing BAPO's training batch. \\
$\mathbb{D}_{\text{KL}}$ & Kullback–Leibler divergence, used to constrain policy deviation. \\
$\alpha$ & Rollout policy for BAPO, which synchronizes to $\pi_{\theta}$ every $v$ steps. \\
$\pi_{\theta}$ & LLM policy parameterized by $\theta$. \\
$\pi_{\text{ref}}$ & Reference policy (e.g., initial pre-trained model). \\
$\rho(\theta)$ & Importance sampling ratio: $\frac{\pi_{\theta}(y|x)}{\pi_{\text{old}}(y|x)}$. \\
$r(x,y)$ & Reward function, we set to binary (0/1) based on correctness. \\
$\mu_{\alpha,r}(x)$ & Expected reward under policy $\pi$ for input $x$. We approximate this value using the mean of $r(x,y)$ corresponding to $G$ responses $y$ generated by the rollout policy $\alpha$ for each prompt $x$.  \\
$\sigma_{\alpha,r,\varepsilon}(x)$ & Standard deviation of rewards under policy $\alpha$ for input $x$, with smoothing $\varepsilon$. \\
$J(\pi(\cdot|x))$ & Expected reward of policy $\pi$ for input $x$: $\mathbb{E}_{y\sim\pi(\cdot|x)}[r(x,y)]$. \\
$\mathcal{N}(\mu_{\alpha,r}(x) \mid \mu, \sigma^2)$ & A sampling method that assigns weights to online rollouts based on a normal distribution centered at $\mu$ with standard deviation $\sigma$, used to filter samples by their group mean reward $\mu_{\alpha,r}(x)$. \\
\bottomrule
\end{tabular}
\label{tab:glossary}
\end{table}

\subsection{Theoretical Analysis}

\begin{lemma}[Kantorovich-Rubenstein duality of total variation distance]
\label{lemma:kantorovich_rubenstein}
The Kantorovich-Rubinstein duality (variational representation) of the total variation distance is as follows:
\begin{equation}
\text{TV}(m_1, m_2) = \frac{1}{2L} \sup_{g \in \mathcal{G}_L} \left\{ \mathbb{E}_{Z \sim m_1}[g(Z)] - \mathbb{E}_{Z \sim m_2}[g(Z)] \right\},
\end{equation}
where $\mathcal{G}_L = \{g : \mathcal{Z} \to \mathbb{R}, \|g\|_\infty \leq L\}$.
\end{lemma}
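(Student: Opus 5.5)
The statement immediately above is the Kantorovich--Rubinstein (variational) representation of total variation, so that is what I plan to prove. I will first verify it for $L=1$ and then reduce general $L$ to that case by scaling. I use the convention $\text{TV}(m_1,m_2)=\sup_{A}|m_1(A)-m_2(A)| = \tfrac12\int |dm_1-dm_2|$, where the integral is taken against any dominating measure $\nu$ (for instance $\nu=m_1+m_2$) with densities $p_1=dm_1/d\nu$ and $p_2=dm_2/d\nu$. For any $g$ with $\|g\|_\infty\le L$, rescaling gives $g/L\in\mathcal{G}_1$, and this map is a bijection between $\mathcal{G}_L$ and $\mathcal{G}_1$. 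Hence the supremum over $\mathcal{G}_L$ equals $L$ times the supremum over $\mathcal{G}_1$. It therefore suffices to show that $\sup_{\|g\|_\infty\le 1}\{\mathbb{E}_{m_1}g-\mathbb{E}_{m_2}g\}=2\,\text{TV}(m_1,m_2)$.

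\textbf{Upper bound.} For $\|g\|_\infty\le 1$, I write the difference of expectations as $\int g\,(p_1-p_2)\,d\nu$. This is bounded by $\int |g|\,|p_1-p_2|\,d\nu\le \int|p_1-p_2|\,d\nu = 2\,\text{TV}(m_1,m_2)$.

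\textbf{Lower bound (attainment).} I take $g^\ast=\mathbf{1}_{A}-\mathbf{1}_{A^c}$ with $A=\{p_1\ge p_2\}$. This $g^\ast$ is measurable and satisfies $\|g^\ast\|_\infty\le1$. It gives $\int g^\ast(p_1-p_2)\,d\nu=\int|p_1-p_2|\,d\nu$, so the supremum is attained and equals $2\,\text{TV}$.

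To close the argument, I need the identity $\tfrac12\int|p_1-p_2|\,d\nu=\sup_A|m_1(A)-m_2(A)|$. This follows from $\int(p_1-p_2)\,d\nu=0$: the positive part and the negative part of $p_1-p_2$ each integrate to half the $L^1$ norm, and the supremum over $A$ is achieved at $A=\{p_1\ge p_2\}$. Combining with the scaling step gives $\sup_{\mathcal{G}_L}\{\cdots\}=2L\,\text{TV}(m_1,m_2)$, which is exactly the claimed identity.

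The only delicate point is measure-theoretic: making sure a common dominating measure exists and that $g^\ast$ is measurable. Both are immediate with $\nu=m_1+m_2$ and the Radon--Nikodym theorem. In the paper's setting $\mathcal{Z}$ is the discrete space of token sequences, where all of this reduces to finite or countable sums, so I do not expect a genuine obstacle.
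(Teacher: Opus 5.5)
Your proof is correct and complete. The pieces are:
\begin{itemize}
\item the scaling reduction to $L=1$ (valid since the functional is linear in $g$);
\item the upper bound $\int g\,(p_1-p_2)\,d\nu\le\int|p_1-p_2|\,d\nu$;
\item the explicit maximizer $g^\ast=\mathbf{1}_{\{p_1\ge p_2\}}-\mathbf{1}_{\{p_1<p_2\}}$;
\item the identification $\tfrac12\int|p_1-p_2|\,d\nu=\sup_A|m_1(A)-m_2(A)|$, obtained from $\int(p_1-p_2)\,d\nu=0$.
\end{itemize}
Together these give exactly $\sup_{\mathcal{G}_L}\{\cdots\}=2L\,\mathrm{TV}(m_1,m_2)$.

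The paper gives no proof of this lemma. It quotes the identity as a standard fact, carried over from the analysis of Mroueh et al., under the name Kantorovich--Rubinstein duality, and never fixes the normalization of $\mathrm{TV}$. Your direct density argument is the natural self-contained route. Going through genuine Kantorovich--Rubinstein duality would mean viewing $\mathrm{TV}$ as $W_1$ for the discrete metric $\mathbf{1}\{z\neq z'\}$. The $1$-Lipschitz functions for that metric are, up to an additive constant (which does not change $\mathbb{E}_{m_1}g-\mathbb{E}_{m_2}g$), those with $\|g\|_\infty\le\tfrac12$. This recovers the same formula at $L=\tfrac12$, but it is much heavier machinery than the statement needs.

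Your write-up also makes explicit two things the paper leaves implicit.
\begin{itemize}
\item By committing to $\mathrm{TV}=\tfrac12\|p_1-p_2\|_1$, you are using the convention under which the factor $\tfrac{1}{2L}$ is correct. Under the unnormalized convention $\|p_1-p_2\|_1$, the lemma would be off by a factor of $2$.
\item Your upper-bound step alone is all the paper actually uses: Step~1 of the proof of Theorem~\ref{thm:app_policy_improvement} only needs $|J(\pi_\theta(\cdot|x))-J(\alpha_i(\cdot|x))|\le 2\,\mathrm{TV}$ for $0\le r\le 1$. The attainment direction is never invoked there. Applying your bound to the centered reward $r-\tfrac12$, which has sup-norm at most $\tfrac12$, would sharpen that step to $\le\mathrm{TV}$.
\end{itemize}
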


\begin{theorem}[\textbf{Policy Improvement Lower Bound with Adaptive Training Batch}]
\label{thm:app_policy_improvement}
Assume rewards are bounded: $0 \leq r \leq 1$. Let $\pi_{\theta_t}$ be the current policy, $\alpha_1 = \pi_{\theta_{t-v}}$ be the delayed rollout policy, $\alpha_2 = \pi_{\theta_t}$ be the current policy for re-evaluation, $\alpha_3 = \alpha_{\mathcal{B}}$ be the buffer policy distribution, and $I(x)$ be the filtering function partitioning samples into $\mathcal{X}_1$, $\mathcal{X}_2$, and $\mathcal{X}_3$.

Suppose $c_1, c_2, c_3 \in (0,1)$ with $c_2 < c_3$. and the following TV distance constraints hold:
\begin{align}
\text{TV}(\pi_{\theta_t}(\cdot|x), \pi_{\theta_{t-v}}(\cdot|x)) &\leq \delta_1 \quad \forall x \in \mathcal{X}_1\\
\text{TV}(\pi_{\theta_t}(\cdot|x), \alpha_{\mathcal{B}}(\cdot|x)) &\leq \delta_3 \quad \forall x \in \mathcal{X}_3
\end{align}
where $\delta_1, \delta_3 > 0$ are sufficiently small such that the variance lower bounds remain positive.

Then, for the policy update objective in Equation~\ref{eq:objective}, the expected policy improvement over filtered samples satisfies:
$$\mathbb{E}_{x\sim\rho_\mathcal{X}}[I(x)(J(\pi_{\theta}(\cdot|x)) - J(\pi_{\theta_t}(\cdot|x)))] \geq \sum_{i=1}^{3} \mathcal{L}_i(\pi_{\theta}, \alpha_i)$$
where:
$$\mathcal{L}_i(\pi_{\theta}, \alpha_i) = \mathbb{E}_{x\in\mathcal{X}_i} \left[L_{\alpha_i}(\pi_{\theta}(\cdot|x)) - 2\textcolor{black}{K_i} \cdot \text{TV}(\pi_{\theta}(\cdot|x), \alpha_i(\cdot|x)) - 2\text{TV}(\pi_{\theta_t}(\cdot|x), \alpha_i(\cdot|x))\right]$$
with $L_{\alpha_i}(\pi_{\theta}(\cdot|x)) = \frac{1}{\sigma_{\alpha_i,r,\varepsilon}(x)}(J(\pi_{\theta}(\cdot|x))-J(\alpha_i(\cdot|x)))$. The constants are:
\begin{align}
K_1 &= \frac{1 - \sqrt{\frac{G-1}{G^2} + \varepsilon}}{\sqrt{\frac{G-1}{G^2} + \varepsilon}}\\
K_2 &= \frac{1 - \sqrt{c_1(1-c_1) + \varepsilon}}{\sqrt{c_1(1-c_1) + \varepsilon}}\\
K_3 &= \frac{1 - \sqrt{\min(c_2(1-c_2), c_3(1-c_3)) + \varepsilon}}{\sqrt{\min(c_2(1-c_2), c_3(1-c_3)) + \varepsilon}}
\end{align}
\end{theorem}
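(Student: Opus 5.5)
The argument is per prompt and follows the single-policy bound of \citep{mroueh2025revisiting}; we then take expectations over each subset $\mathcal{X}_i$. Since $I(x)$ is a sum of three indicators, we treat the three selection criteria as partitioning the filtered prompts. This gives
\[
\mathbb{E}_{x\sim\rho_\mathcal{X}}\bigl[I(x)\bigl(J(\pi_\theta(\cdot|x))-J(\pi_{\theta_t}(\cdot|x))\bigr)\bigr]=\sum_{i=1}^3 \mathbb{E}_{x\in\mathcal{X}_i}\bigl[J(\pi_\theta(\cdot|x))-J(\pi_{\theta_t}(\cdot|x))\bigr].
\]
It therefore suffices to show, for each $i$ and each $x\in\mathcal{X}_i$,
\[
J(\pi_\theta)-J(\pi_{\theta_t}) \ge L_{\alpha_i}(\pi_\theta) - 2K_i\,\text{TV}(\pi_\theta,\alpha_i) - 2\,\text{TV}(\pi_{\theta_t},\alpha_i),
\]
where we suppress the dependence on $x$.

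\textbf{Per-prompt decomposition.} Write
\[
J(\pi_\theta)-J(\pi_{\theta_t}) = \bigl(J(\pi_\theta)-J(\alpha_i)\bigr) + \bigl(J(\alpha_i)-J(\pi_{\theta_t})\bigr).
\]
The second term has the form $\mathbb{E}_{\alpha_i}r-\mathbb{E}_{\pi_{\theta_t}}r$ with $\|r\|_\infty\le 1$. By Lemma~\ref{lemma:kantorovich_rubenstein} with $L=1$, it is at least $-2\,\text{TV}(\pi_{\theta_t},\alpha_i)$.

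For the first term, set $\Delta=J(\pi_\theta)-J(\alpha_i)$ and $\sigma=\sigma_{\alpha_i,r,\varepsilon}(x)$. Then
\[
\Delta = \frac{\Delta}{\sigma} - \Delta\,\frac{1-\sigma}{\sigma} = L_{\alpha_i}(\pi_\theta) - \frac{1-\sigma}{\sigma}\,\Delta .
\]
Again by Lemma~\ref{lemma:kantorovich_rubenstein}, $|\Delta|\le 2\,\text{TV}(\pi_\theta,\alpha_i)$. Provided $\varepsilon$ is small enough that $\sigma\le1$ (binary rewards give variance at most $1/4$), the coefficient $(1-\sigma)/\sigma$ is nonnegative. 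Hence $\Delta\ge L_{\alpha_i}(\pi_\theta)-2\frac{1-\sigma}{\sigma}\text{TV}(\pi_\theta,\alpha_i)$. The function $\sigma\mapsto(1-\sigma)/\sigma$ is decreasing, so it remains to lower bound $\sigma$ uniformly on each $\mathcal{X}_i$. Then $(1-\sigma)/\sigma\le K_i$.

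\textbf{Variance lower bounds.} For binary rewards, $\sigma^2=\mu(1-\mu)+\varepsilon$ with $\mu=\mu_{\alpha_i,r}(x)$, and $\mu(1-\mu)$ is concave and symmetric about $1/2$. On an interval, its minimum is therefore attained at an endpoint.
\begin{itemize}
\item On $\mathcal{X}_1$ we have $\mu\in[1/G,(G-1)/G]$, so $\mu(1-\mu)\ge (G-1)/G^2$, which gives $K_1$.
\item On $\mathcal{X}_3$ we have $\mu\in[c_2,c_3]$, so $\mu(1-\mu)\ge\min(c_2(1-c_2),c_3(1-c_3))$, which gives $K_3$.
\item On $\mathcal{X}_2$ we have $\alpha_2=\pi_{\theta_t}$ and $\mu\in(c_1,1)$. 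Here the paper's $K_2$ uses $c_1(1-c_1)$. This needs the (implicit) restriction that $\mu$ lies between $c_1$ and $1-c_1$, or some analogous bound. I would state this as the intended reading, or tighten the selection to $c_1<\mu\le 1-c_1$. Note also that $\text{TV}(\pi_{\theta_t},\alpha_2)=0$, so the last penalty vanishes for $i=2$.
\end{itemize}

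\textbf{Main obstacle.} The delicate point is that the filter thresholds are stated in terms of empirical or rollout-policy means, whereas $\sigma$ must be controlled for the policy $\alpha_i$ actually used in $L_{\alpha_i}$. For $\mathcal{X}_1$ and $\mathcal{X}_3$ these coincide, since the filter uses $\alpha_1$ and $\alpha_{\mathcal{B}}$ respectively. The TV assumptions with $\delta_1,\delta_3$ are then only needed if one wants the bound in terms of $\pi_{\theta_t}$'s variance. In that case $|\mu_{\pi_{\theta_t}}-\mu_{\alpha_i}|\le 2\delta_i$ shifts the intervals, and "sufficiently small" keeps the shifted variance bound positive. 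I would handle this remark explicitly. I would also handle the $\mathcal{X}_2$ upper endpoint, which is where the stated constant requires the most care. Summing the three per-subset bounds then yields the theorem.
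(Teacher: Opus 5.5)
Your proposal is essentially the paper's proof. It uses the same per-prompt identity writing $J(\pi_\theta)-J(\pi_{\theta_t})$ as $L_{\alpha_i}-\frac{1-\sigma}{\sigma}\bigl(J(\pi_\theta)-J(\alpha_i)\bigr)+\bigl(J(\alpha_i)-J(\pi_{\theta_t})\bigr)$, the Kantorovich--Rubinstein bound on both differences, endpoint minimization of $\mu(1-\mu)$ on each subset, and summation over the partition. You are more explicit than the paper about the monotonicity of $\sigma\mapsto(1-\sigma)/\sigma$, the need for $\sigma\le 1$, the binary-reward assumption behind $\sigma^2=\mu(1-\mu)+\varepsilon$, and the fact that the $\delta_1,\delta_3$ hypotheses are never actually used.

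Your caveat on $\mathcal{X}_2$ is correct and exposes a real flaw in the paper's Step 3. That step asserts $\mu(1-\mu)>c_1(1-c_1)$ from $c_1<\mu_{\pi_{\theta_t},r}(x)<1$, but this fails for $\mu>1-c_1$. So a restriction such as the $c_1<\mu\le 1-c_1$ you propose is genuinely needed for the stated $K_2$.
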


\begin{proof}
We prove the bound by analyzing each filtered sample set separately, applying off-policy policy improvement bounds tailored to the reference distribution used in each region.

\paragraph{Step 1: Core inequality for off-policy samples.}
For any $x$ such that $I(x) = 1$, we establish the fundamental inequality:
\begin{align}
J(\pi_{\theta}(\cdot|x)) - J(\pi_{\theta_t}(\cdot|x)) &\geq L_{\alpha_i}(\pi_{\theta}(\cdot|x)) - 2K_i \cdot \text{TV}(\pi_{\theta}(\cdot|x), \alpha_i(\cdot|x)) \\
&\quad - 2\text{TV}(\pi_{\theta_t}(\cdot|x), \alpha_i(\cdot|x))
\end{align}
where $K_i = \frac{1 - \sigma_{\alpha_i,r,\varepsilon}(x)}{\sigma_{\alpha_i,r,\varepsilon}(x)}$ is a constant that depends on the variance of rewards in each filtered subset.

First, we expand the advantage objective. By definition:
\begin{align}
L_{\alpha_i}(\pi_{\theta}(\cdot|x)) &= \mathbb{E}_{y\sim\alpha_i(\cdot|x)}\left[\frac{\pi_{\theta}(y|x)}{\alpha_i(y|x)} A_{\alpha_i}(x, y)\right] \\
&= \mathbb{E}_{y\sim\alpha_i(\cdot|x)}\left[\frac{\pi_{\theta}(y|x)}{\alpha_i(y|x)} \cdot \frac{r(x, y) - \mu_{\alpha_i,r}(x)}{\sigma_{\alpha_i,r,\varepsilon}(x)}\right] \\
&= \frac{1}{\sigma_{\alpha_i,r,\varepsilon}(x)}(J(\pi_{\theta}(\cdot|x)) - J(\alpha_i(\cdot|x)))
\end{align}

Next, we establish the key algebraic identity relating $L_{\alpha_i}(\pi_{\theta}(\cdot|x))$ to $J(\pi_{\theta}(\cdot|x)) - J(\pi_{\theta_t}(\cdot|x))$:
\begin{align}
&L_{\alpha_i}(\pi_{\theta}(\cdot|x)) - (J(\pi_{\theta}(\cdot|x)) - J(\pi_{\theta_t}(\cdot|x))) \\
&= \frac{1 - \sigma_{\alpha_i,r,\varepsilon}(x)}{\sigma_{\alpha_i,r,\varepsilon}(x)}(J(\pi_{\theta}(\cdot|x)) - J(\alpha_i(\cdot|x))) + (J(\pi_{\theta_t}(\cdot|x)) - J(\alpha_i(\cdot|x)))
\end{align}

\textbf{Application of Kantorovich-Rubenstein duality:} For bounded rewards with $\|r\|_\infty = 1$, the Kantorovich-Rubenstein duality Lemma~\ref{lemma:kantorovich_rubenstein} provides:
\begin{align}
|J(\pi_{\theta}(\cdot|x)) - J(\alpha_i(\cdot|x))| &\leq 2 \cdot \text{TV}(\pi_{\theta}(\cdot|x), \alpha_i(\cdot|x)) \\
|J(\pi_{\theta_t}(\cdot|x)) - J(\alpha_i(\cdot|x))| &\leq 2 \cdot \text{TV}(\pi_{\theta_t}(\cdot|x), \alpha_i(\cdot|x))
\end{align}

Since $0 \leq r \leq 1$, we have $\sigma_{\alpha_i,r,\varepsilon}(x) < 1$, ensuring $K_i = \frac{1 - \sigma_{\alpha_i,r,\varepsilon}(x)}{\sigma_{\alpha_i,r,\varepsilon}(x)} \geq 0$. Combining these bounds yields the desired inequality.

\paragraph{Step 2: Analysis for $\mathcal{X}_1$ (Filtered fresh samples).}
For $x \in \mathcal{X}_1$, samples are generated by the delayed rollout policy $\alpha_1 = \pi_{\theta_{t-v}}$ and selected via Gaussian sampling with group-level accuracy $\mu_{\alpha_1,r}(x) \in \{\frac{1}{G}, \frac{2}{G}, \ldots, \frac{G-1}{G}\}$, excluding extremes $\{0, 1\}$.

\textbf{Variance analysis on discrete set:} For the variance function $f(p) = p(1-p)$ over the discrete set $\{\frac{1}{G}, \frac{2}{G}, \ldots, \frac{G-1}{G}\}$, the minimum value occurs at the boundary points $p = \frac{1}{G}$ or $p = \frac{G-1}{G}$, both yielding $f(p) = \frac{G-1}{G^2}$. Therefore:
\begin{align}
\sigma^2_{\alpha_1,r}(x) = \mu_{\alpha_1,r}(x)(1-\mu_{\alpha_1,r}(x)) \geq \frac{G-1}{G^2}
\end{align}
Thus: $\sigma_{\alpha_1,r,\varepsilon}(x) \geq \sqrt{\frac{G-1}{G^2} + \varepsilon}$, yielding:
$$K_1 = \frac{1 - \sqrt{\frac{G-1}{G^2} + \varepsilon}}{\sqrt{\frac{G-1}{G^2} + \varepsilon}}$$

\paragraph{Step 3: Analysis for $\mathcal{X}_2$ (Re-evaluated difficult samples).}
For $x \in \mathcal{X}_2$, samples are generated by the current policy $\alpha_2 = \pi_{\theta_t}$ through re-evaluation of historically difficult samples. The selection criterion ensures that historically difficult samples ($\mu_{\alpha_B,r}(x) \leq c_1$) now achieve improved performance ($c_1 < \mu_{\pi_{\theta_t},r}(x) < 1$) under the current policy.

Since these samples are directly generated by $\pi_{\theta_t}$, we have $\alpha_2 = \pi_{\theta_t}$, and the constraint $c_1 < \mu_{\pi_{\theta_t},r}(x) < 1$ provides a natural lower bound, yielding:
\begin{align}
\sigma^2_{\alpha_2,r}(x) = \mu_{\alpha_2,r}(x)(1-\mu_{\alpha_2,r}(x)) > c_1(1-c_1)
\end{align}
Therefore: $\sigma_{\alpha_2,r,\varepsilon}(x) > \sqrt{c_1(1-c_1) + \varepsilon}$, giving us:
$$K_2 = \frac{1 - \sqrt{c_1(1-c_1) + \varepsilon}}{\sqrt{c_1(1-c_1) + \varepsilon}}$$

\paragraph{Step 4: Analysis for $\mathcal{X}_3$ (Historical high-quality samples).}
For $x \in \mathcal{X}_3$, samples are generated by historical buffer policies $\alpha_3 = \alpha_B$ with $\mu_{\alpha_B,r}(x) \in [c_2, c_3]$.

Since $\mu_{\alpha_3,r}(x)(1 - \mu_{\alpha_3,r}(x))$ achieves its minimum at the endpoints of the interval $[c_2, c_3]$:
\begin{align}
\sigma^2_{\alpha_3,r}(x) \geq \min(c_2(1-c_2), c_3(1-c_3))
\end{align}
Therefore: $\sigma_{\alpha_3,r,\varepsilon}(x) \geq \sqrt{\min(c_2(1-c_2), c_3(1-c_3)) + \varepsilon}$, yielding:
$$K_3 = \frac{1 - \sqrt{\min(c_2(1-c_2), c_3(1-c_3)) + \varepsilon}}{\sqrt{\min(c_2(1-c_2), c_3(1-c_3)) + \varepsilon}}$$

\paragraph{Step 5: Combining the results.}
Taking expectations over $x \sim \rho_\mathcal{X}$ and applying the indicator function decomposition:
\begin{align}
&\mathbb{E}_{x\sim\rho_\mathcal{X}}[I(x)(J(\pi_{\theta}(\cdot|x)) - J(\pi_{\theta_t}(\cdot|x)))] \\
&= \sum_{i=1}^{3} \mathbb{E}_{x\sim\rho_\mathcal{X}}[\mathbf{1}_{\{x\in\mathcal{X}_i\}}(J(\pi_{\theta}(\cdot|x)) - J(\pi_{\theta_t}(\cdot|x)))] \\
&\geq \sum_{i=1}^{3} \mathbb{E}_{x\in\mathcal{X}_i} \left[L_{\alpha_i}(\pi_{\theta}(\cdot|x)) - 2K_i \cdot \text{TV}(\pi_{\theta}(\cdot|x), \alpha_i(\cdot|x)) - 2\text{TV}(\pi_{\theta_t}(\cdot|x), \alpha_i(\cdot|x))\right] \\
&= \sum_{i=1}^{3} \mathcal{L}_i(\pi_{\theta}, \alpha_i)
\end{align}

All constants $K_1$, $K_2$, $K_3$ are finite, since denominators are strictly positive by construction and numerators are bounded by 1 under $c_1, c_2, c_3 \in (0,1)$, completing the proof.
\end{proof}

\begin{proposition} 
\label{thm:proposition}
\textcolor{black}{For binary reward tasks where $r(x,y) \in \{0, 1\}$, the contribution to the policy improvement lower bound is maximized when the expected group reward of the sample is $\mu = 0.5$.}
\end{proposition}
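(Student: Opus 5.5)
To make the statement precise, I will read ``contribution to the policy improvement lower bound'' through the per-prompt inequality of Step 1 in the proof of Theorem~\ref{thm:app_policy_improvement}:
\[
J(\pi_\theta(\cdot|x)) - J(\pi_{\theta_t}(\cdot|x)) \ge L_{\alpha}(\pi_\theta(\cdot|x)) - 2K(x)\,\mathrm{TV}(\pi_\theta,\alpha) - 2\,\mathrm{TV}(\pi_{\theta_t},\alpha),
\qquad K(x)=\frac{1-\sigma_{\alpha,r,\varepsilon}(x)}{\sigma_{\alpha,r,\varepsilon}(x)} .
\]
The plan is to show that, with the policies (and hence the TV terms) held fixed, every $\mu$-dependent part of this bound depends on $\mu$ only through $\sigma_{\alpha,r,\varepsilon}(x)$. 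I will then show that the bound improves monotonically as $\sigma$ grows, and that $\sigma$ is largest at $\mu=1/2$.

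\textbf{Step 1: reduce to the variance.} For binary rewards $r\in\{0,1\}$, the reward under $\alpha(\cdot|x)$ is Bernoulli with parameter $\mu=\mu_{\alpha,r}(x)$. So $\sigma^2_{\alpha,r}(x)=\mu(1-\mu)$ and $\sigma_{\alpha,r,\varepsilon}(x)=\sqrt{\mu(1-\mu)+\varepsilon}$.

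\textbf{Step 2: maximize the variance.} The function $f(\mu)=\mu(1-\mu)$ is strictly concave, with $f'(\mu)=1-2\mu$. Its unique maximizer on $[0,1]$ is therefore $\mu=1/2$, where $f=1/4$. Since $s\mapsto\sqrt{s+\varepsilon}$ is increasing, $\sigma_{\alpha,r,\varepsilon}$ is also uniquely maximized at $\mu=1/2$.

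\textbf{Step 3: monotonicity of the bound in $\sigma$.} Write $\sigma=\sigma_{\alpha,r,\varepsilon}(x)$. The penalty coefficient is $K=1/\sigma-1$, which is strictly decreasing in $\sigma$. Hence $-2K\,\mathrm{TV}(\pi_\theta,\alpha)$ is maximized, meaning the penalty is smallest, at the largest $\sigma$. The term $2\,\mathrm{TV}(\pi_{\theta_t},\alpha)$ does not depend on $\mu$. Combining with Step~2, the lower bound is largest at $\mu=1/2$.

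\textbf{Main obstacle: the surrogate term.} The difficulty is $L_\alpha=(J(\pi_\theta)-J(\alpha))/\sigma$, whose dependence on $\sigma$ changes sign with the numerator. I would handle it by regrouping the bound exactly as in the algebraic identity of Step~1:
\[
L_\alpha - 2K\,\mathrm{TV}(\pi_\theta,\alpha) = \bigl(J(\pi_\theta)-J(\alpha)\bigr) + K\Bigl[\bigl(J(\pi_\theta)-J(\alpha)\bigr) - 2\,\mathrm{TV}(\pi_\theta,\alpha)\Bigr].
\]
By Lemma~\ref{lemma:kantorovich_rubenstein} with $\|r\|_\infty\le 1$, the bracketed quantity is $\le 0$. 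Its coefficient $K$ is decreasing in $\sigma$, so the whole expression is nondecreasing in $\sigma$. This holds regardless of the sign of $J(\pi_\theta)-J(\alpha)$.

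If instead ``contribution'' is read as the per-prompt gradient magnitude of $L_\alpha$, I would note that the normalized advantages $(r-\mu)/\sigma$ have second moment $\mu(1-\mu)/(\mu(1-\mu)+\varepsilon)$. This is again increasing in $\mu(1-\mu)$, so the same variance argument gives $\mu=1/2$.

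Under either reading, $\mu=1/2$ is the maximizer, and it is unique by the strict concavity in Step~2.
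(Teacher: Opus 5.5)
Your Steps 1--3 reproduce the paper's proof. The paper reads \emph{contribution} as the size of the penalty coefficient $K=(1-\sigma)/\sigma$, uses $\sigma^2=\mu(1-\mu)$ for Bernoulli rewards, and maximizes $\mu(1-\mu)$ by the derivative test. It explicitly mentions the $1/\sigma$ factor coming from advantage standardization and then sets it aside. Your regrouping genuinely strengthens this. With $\Delta=J(\pi_\theta(\cdot|x))-J(\alpha(\cdot|x))$, it shows the per-prompt bound equals $(\Delta-2\,\mathrm{TV}(\pi_\theta,\alpha))/\sigma+2\,\mathrm{TV}(\pi_\theta,\alpha)-2\,\mathrm{TV}(\pi_{\theta_t},\alpha)$. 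Since $\Delta\le 2\,\mathrm{TV}(\pi_\theta,\alpha)$, the whole bound, not just the penalty, is nondecreasing in $\sigma$.

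Two points need fixing.

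First, your framing \emph{with the policies held fixed, every $\mu$-dependent part depends on $\mu$ only through $\sigma$} is not correct. The quantity $\mu=\mu_{\alpha,r}(x)=J(\alpha(\cdot|x))$ itself appears inside $\Delta$, so you cannot hold $\alpha$ fixed while varying $\mu$. What your argument actually holds fixed is $\Delta$ and the two TV values, with $\sigma$ as the free parameter; you should say so. This matters, because the conclusion depends on that choice. If you instead fix $J(\pi_\theta(\cdot|x))$ and the TVs, the bound need not peak at $1/2$. For example, take $\varepsilon\to 0$, $\alpha=\pi_{\theta_t}$, $J(\pi_\theta(\cdot|x))=0.5$ and $\mathrm{TV}(\pi_\theta,\alpha)=0.1$. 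Then the bound is about $-0.004$ at $\mu=0.4$ but $-0.2$ at $\mu=0.5$.

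Second, your uniqueness claim under the first reading needs strict monotonicity in $\sigma$. That holds only when $\mathrm{TV}(\pi_\theta,\alpha)>0$, because then $\Delta\le\mathrm{TV}<2\,\mathrm{TV}$. If $\pi_\theta(\cdot|x)=\alpha(\cdot|x)$, the bound is constant in $\sigma$. Similarly, your second-moment reading yields a unique maximizer only because $\varepsilon>0$. At $\varepsilon=0$ the normalized advantages have unit second moment for every $\mu\in(0,1)$.
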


\begin{proof}

\textcolor{black}{Recalling Theorem~\ref{thm:policy_improvement}, the lower bound for policy improvement on a specific data distribution involves the constant $K$, which scales the penalty for policy divergence. The tightness of this bound is governed by the standard deviation of the rewards $\sigma_{\alpha, r}(x)$.}

\textcolor{black}{Due to advantage standardization $\hat{A} \propto \frac{1}{\sigma}$, the effective step size in the advantage estimation and consequently the gradient magnitude is proportional to the inverse of the standard deviation. However, in the context of the lower bound analysis in Theorem~\ref{thm:policy_improvement}, the stability constant $K$ is defined as:}
\begin{align}
K(\mu) = \frac{1 - \sigma(\mu)}{\sigma(\mu)}
\end{align}
\textcolor{black}{where a smaller $K$ indicates a tighter bound and thus a larger guaranteed improvement step. 
For a binary reward function $r \in \{0, 1\}$, the reward distribution follows a Bernoulli distribution with parameter $\mu(x) = \mathbb{E}[r|x]$. The variance is given by:}
\begin{align}
\sigma^2(\mu) = \mu(1 - \mu)
\end{align}
\textcolor{black}{To find the $\mu$ that maximizes variance, we take the derivative with respect to $\mu$:}
\begin{align}
\frac{d}{d\mu}(\mu - \mu^2) = 1 - 2\mu
\end{align}
\textcolor{black}{Setting the derivative to zero:}
\begin{align}
1 - 2\mu = 0 \implies \mu = 0.5
\end{align}
\textcolor{black}{Since the second derivative $\frac{d^2}{d\mu^2} = -2 < 0$, this is a global maximum.}

\textcolor{black}{At $\mu=0.5$, the variance is maximized ($\sigma^2 = 0.25, \sigma = 0.5$). This corresponds to the state of maximum entropy, where the model is most "uncertain" about the outcome. Training on these samples provides the strongest gradient signal for distinguishing between correct and incorrect reasoning paths, effectively maximizing the information gain per step. Conversely, as $\mu \to 0$ or $\mu \to 1$, $\sigma \to 0$, causing the advantage estimates to numerical instability or the gradient signal to vanish. Therefore, selecting samples with $\mu=0.5$ theoretically offers the most efficient learning signal and the most favorable stability bound.}

\end{proof}

\subsection{Online Filter Mechanism Analysis}
\label{sec::online_filter}
\textcolor{black}{
To investigate the impact of fresh sample selection on training stability and convergence, we conduct an ablation study using Qwen3 8B with a 4K response length limit. We compare three distinct filtering strategies for the online component ($\mathcal{X}_1$):}

\textcolor{black}{\textbf{Mode 1 (Range Filter):} It retains samples with group mean rewards $\mu \in [\frac{1}{G}, \frac{G-1}{G}]$. This effectively removes only the zero-advantage samples (all-correct or all-incorrect) that contribute minimal gradients.}

\textcolor{black}{\textbf{Mode 2 (Gaussian Filter):} A difficulty-weighted strategy that prioritizes samples with high variance (accuracy near 0.5) using a Gaussian distribution, thereby reducing the proportion of extremely easy or hard samples.}

\textcolor{black}{\textbf{Mode 3 (Uniform Filter):} A baseline that randomly selects 60\% of the fresh samples regardless of their quality. This ratio was chosen to match the approximate data retention rates of Mode 1 and Mode 2 (approximately 40\%--60\%) for a fair comparison of data volume.}

\begin{figure}[htbp]
\centering
\includegraphics[width=1\textwidth]{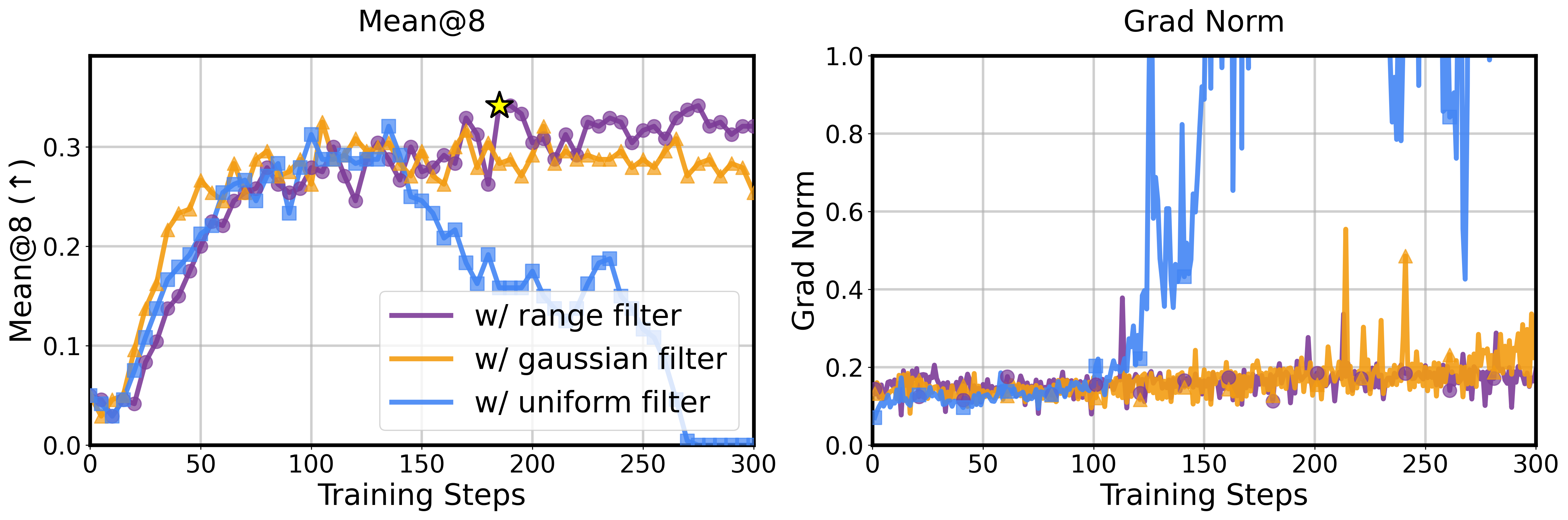}
\caption{\textcolor{black}{\textbf{Ablation on Online Filtering Strategies.} Comparison of Range Filter, Gaussian Filter, and Uniform Filter on training stability (Grad Norm) and performance (Mean@8). The star symbol indicates the best checkpoint for BAPO.}}
\label{fig::filter_ablation}
\end{figure}

\textcolor{black}{\textbf{The Value of Quality over Randomness.} As illustrated in Figure~\ref{fig::filter_ablation}, the uniform filter mechanism exhibits severe instability, characterized by exploding gradient norms and a complete collapse in performance after 150 steps. Since this strategy blindly includes all-wrong samples (where $\mu=0$), the model is forced to update based on low-quality, zero-advantage signals. Suppressing the token probabilities of incorrect responses without a corresponding positive signal introduces significant noise and uncertainty, ultimately destabilizing the policy. This failure highlights that the \textit{quality} of the training batch, particularly the exclusion of zero-advantage noise, is crucial.}

\textcolor{black}{\textbf{Convergence Speed and Final Performance.} The Gaussian filter demonstrates faster convergence in the early stages. By focusing heavily on samples with the highest variance (accuracy $\approx$ 0.5), it provides the steepest learning signal initially. However, its final convergence performance is lower than that of the range filter. We hypothesize that the Gaussian filter restricts sample diversity by aggressively filtering out samples that are slightly easier or harder but still informative. In contrast, the range filter retains a broader spectrum of valid samples. While it learns slightly slower initially, it maintains a rich distribution of training data, preventing premature plateauing and ultimately achieving the highest asymptotic performance.}

\subsection{Training Dynamics and Test Curves}
\label{app:Dynamics}
As illustrated in Figure~\ref{fig::train_dyn} and Figure~\ref{fig::test_tracking}, we present more detailed training dynamics and test curves for the Planning and Vision Geometry tasks. The results indicate that both BAPO and DAPO consistently outperform GRPO in terms of training rewards. Interestingly, BAPO exhibits higher entropy, reflecting better exploration capability compared to other algorithms, which also results in longer response lengths.


\begin{figure}[htbp]
\centering
\includegraphics[width=1\textwidth]{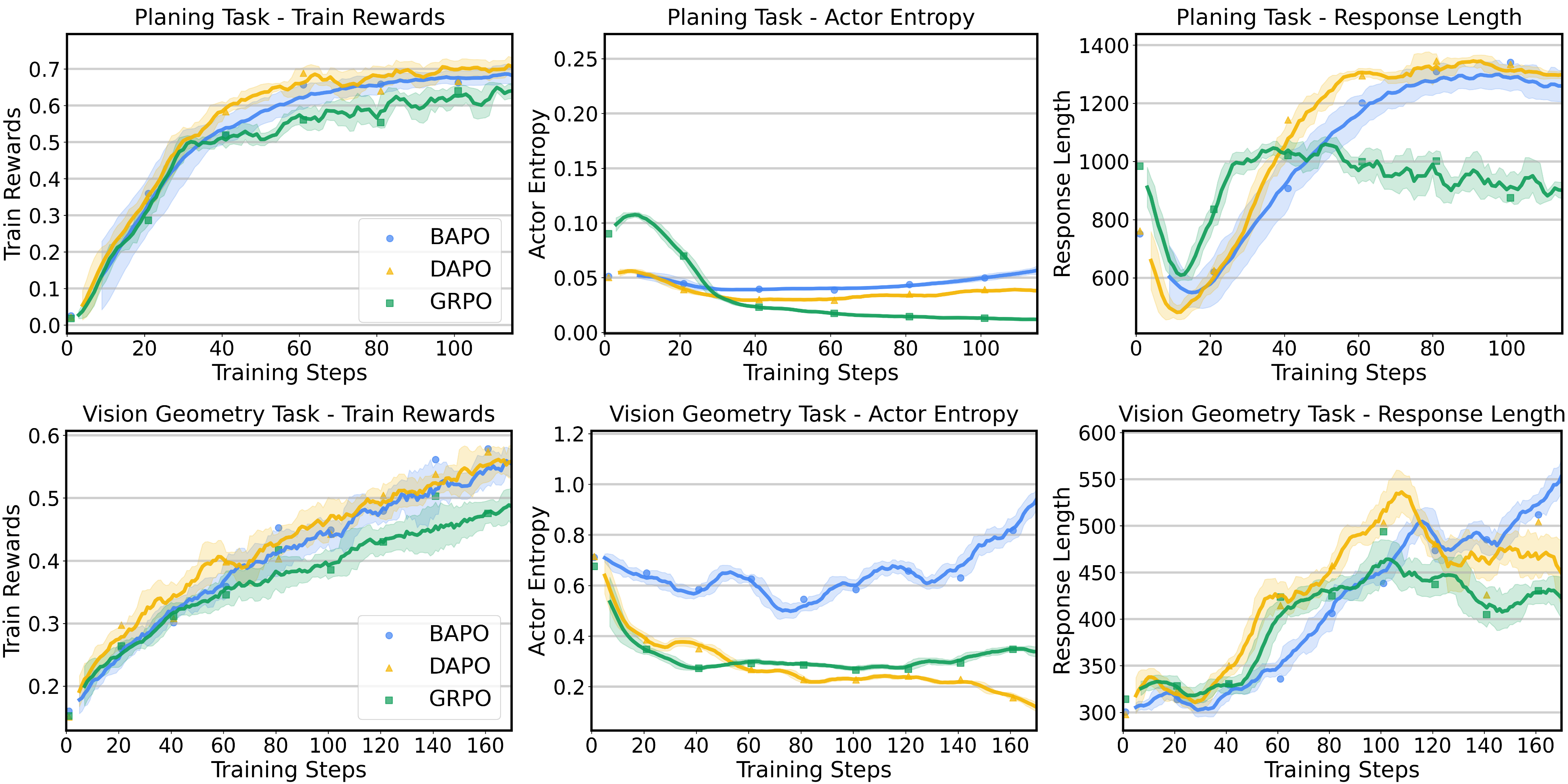}
\caption{\textbf{Training Dynamics} during BAPO, GRPO, and DAPO post-training, including training rewards, training entropy, and response lengths.}
\label{fig::train_dyn}
\end{figure}

\begin{figure}[htbp]
\centering
\includegraphics[width=1\textwidth]{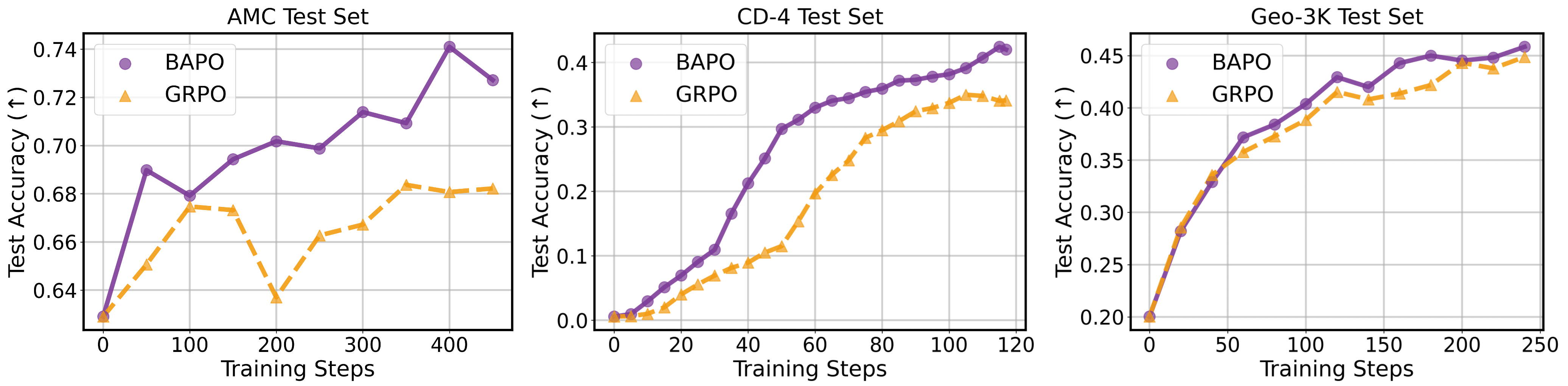}
\caption{\textbf{Test Curves of Group Accuracy Changes} for mathematics, planning, and geometry tasks among AMC, CD-4 test set, and Geo-3K test set, respectively.}
\label{fig::test_tracking}
\end{figure}

\subsection{Computation Analysis}
\label{app:Computation}
From Table~\ref{tab:computational_overhead}, we observe that BAPO's computational overhead correlates with the number of samples requiring re-evaluation and the actual training batch size. For the Planning task, BAPO (w/o $\mathcal{X}_2$) achieves the fastest training time by eliminating bad case re-evaluation, but this comes at the cost of reduced performance. For the Mathematics task, the high number of bad cases (as shown by the 0/8 accuracy samples in Figure~\ref{fig::eval_tracking}) means that under our re-evaluation frequency setting of $m=5$, inference time exceeds that of GRPO. However, this additional time investment proves valuable, yielding better bad-case handling rates and overall test performance, as shown in Figure~\ref{fig::train_tracking} and Table~\ref{tab:math_exp}. We plan to explore lower re-evaluation frequencies to assess the performance trade-offs.

BAPO ($c_2 = 0.375, c_3 = 0.5$) runs significantly faster than BAPO ($c_2 = 0, c_3 = 0.25$) due to the larger historical data volume in the latter configuration. This causes BAPO ($c_2 = 0, c_3 = 0.25$) to maintain a larger effective batch size than BAPO ($c_2 = 0.375, c_3 = 0.5$). Training logs also confirm this observation: BAPO ($c_2 = 0, c_3 = 0.25$) consistently utilizes 100\% of the configured batch size (equivalent to on-policy methods' batch size), while BAPO ($c_2 = 0.375, c_3 = 0.5$) operates at approximately 70\% capacity.

\begin{table}[htbp]
\centering
\caption{\textbf{Computational Overhead Analysis.} ``\textbf{Batch size}'' $(a, b)$ represents the sample batch size $a$ and train mini batch size $b$. ``\textbf{Time}'' is measured in total training time (d=days, h=hours, m=minutes) on 8 A100 GPUs.}
\label{tab:computational_overhead}
\resizebox{\textwidth}{!}{
\begin{tabular}{ll|c|c|r}
\toprule
\textbf{Tasks} & \textbf{Methods} & \textbf{Batch Size} & \textbf{Num Epoch} & \textbf{Time} \\
\midrule
\multirow{2}{*}{Mathematics} & GRPO & (256, 64) & 3 & \textbf{1d 16h 58m} \\
                            & DAPO & (256, 64) & 3 & 2d 15h 30m \\
                            & \cellcolor{yellow!20}\textbf{BAPO} & \cellcolor{yellow!20}(256, 64) & \cellcolor{yellow!20}3 & \cellcolor{yellow!20}1d 22h 37m \\
\midrule
\multirow{6}{*}{Planning} & GRPO & (256, 64) & 3 & 3h 47m \\
                          & DAPO & (256, 64) & 3 & 6h 35m \\
                         & \cellcolor{yellow!20}\textbf{BAPO} & \cellcolor{yellow!20}(256, 64) & \cellcolor{yellow!20}3 & \cellcolor{yellow!20}3h 23m \\
                         & \cellcolor{gray!20}\textbf{BAPO (w/o $\mathcal{X}_2$)} & \cellcolor{gray!20}(256, 64) & \cellcolor{gray!20}3 & \cellcolor{gray!20}\textbf{2h 38m} \\
                         & \cellcolor{gray!20}\textbf{BAPO (w/o $\mathcal{X}_3$)} & \cellcolor{gray!20}(256, 64) & \cellcolor{gray!20}3 & \cellcolor{gray!20}3h 4m \\
                         & \cellcolor{gray!20}\textbf{BAPO ($c_2=0, c_3=0.25$)} & \cellcolor{gray!20}(256, 64) & \cellcolor{gray!20}3 & \cellcolor{gray!20}3h 54m \\
                         & \cellcolor{gray!20}\textbf{BAPO ($c_2=0.375, c_3=0.5$)} & \cellcolor{gray!20}(256, 64) & \cellcolor{gray!20}3 & \cellcolor{gray!20}3h 4m \\
\midrule
\multirow{4}{*}{Visual Geometry} & GRPO & (256, 64) & 30 & 7h 55m \\
                                & DAPO & (256, 64) & 30 & 12h 19m \\
                                & \cellcolor{yellow!20}\textbf{BAPO} & \cellcolor{yellow!20}(256, 64) & \cellcolor{yellow!20}30 & \cellcolor{yellow!20}5h 50m \\
                                & \cellcolor{gray!20}\textbf{BAPO (w/o $\mathcal{X}_2$)} & \cellcolor{gray!20}(256, 64) & \cellcolor{gray!20}30 & \cellcolor{gray!20}\textbf{3h 42m} \\
                                & \cellcolor{gray!20}\textbf{BAPO (w/o $\mathcal{X}_3$)} & \cellcolor{gray!20}(256, 64) & \cellcolor{gray!20}30 & \cellcolor{gray!20}4h 31m \\
\bottomrule
\end{tabular}
}
\end{table}


\begin{figure}[htbp]
\centering
\includegraphics[width=1\textwidth]{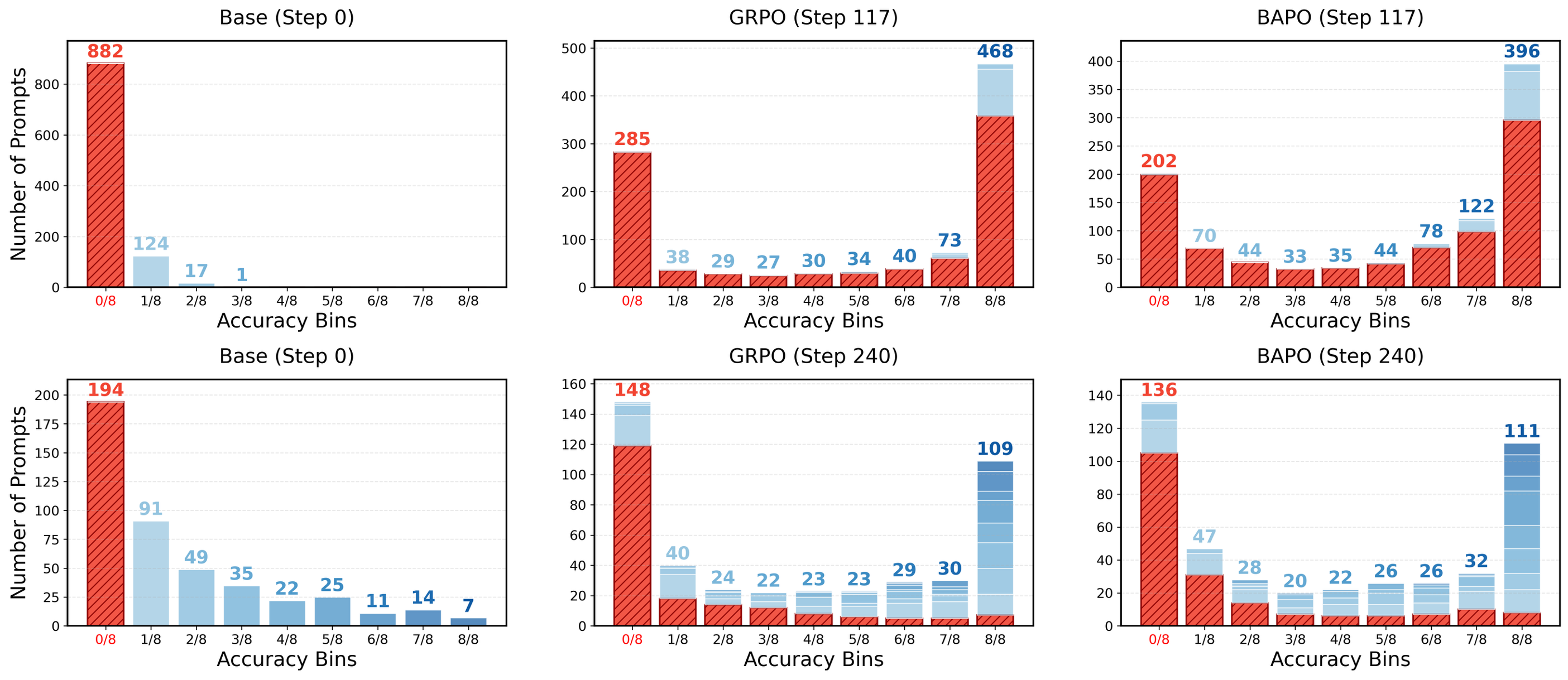}
\caption{Tracking changes in the \textbf{Number of Different Accuracy Bins} on the Countdown (upper) and Geometry3K training sets (lower) for the baseline model, GRPO, and our BAPO method. Special attention is paid to the change in the number of bad samples (red bars) that the base model fails to handle.}
\label{fig::appendix_eval_tracking}
\end{figure}

\begin{figure}[htbp]
\centering
\includegraphics[width=1\textwidth]{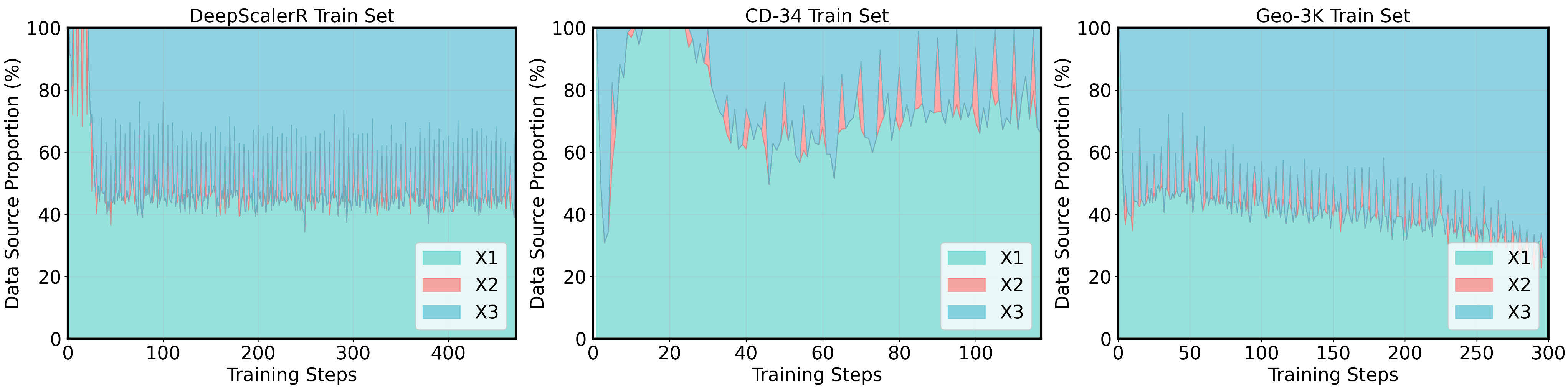}
\caption{\textbf{Batch Distribution Visualization of $\mathcal{X}_1$, $\mathcal{X}_2$, $\mathcal{X}_3$} for Mathematics, Planning, and Visual Geometry Tasks (left to right) during BAPO's training.}
\label{fig::data_visual}
\end{figure}

\subsection{Visualization}
\label{app:Visualization}
We present additional visualization details, including the sample accuracy tracking for the Countdown and Geometry3K datasets, as shown in Figure~\ref{fig::appendix_eval_tracking}. Meanwhile, we visualize the source of samples in each training batch and their respective proportions during the training process, as illustrated in Figure~\ref{fig::test_tracking}. It can be observed that approximately 40-60\% of the actual training samples for BAPO come from online samples $\mathcal{X}_1$, while the remaining samples are derived from $\mathcal{X}_2$ or $\mathcal{X}_3$.

\begin{figure}[t]
\centering
\includegraphics[width=1\textwidth]{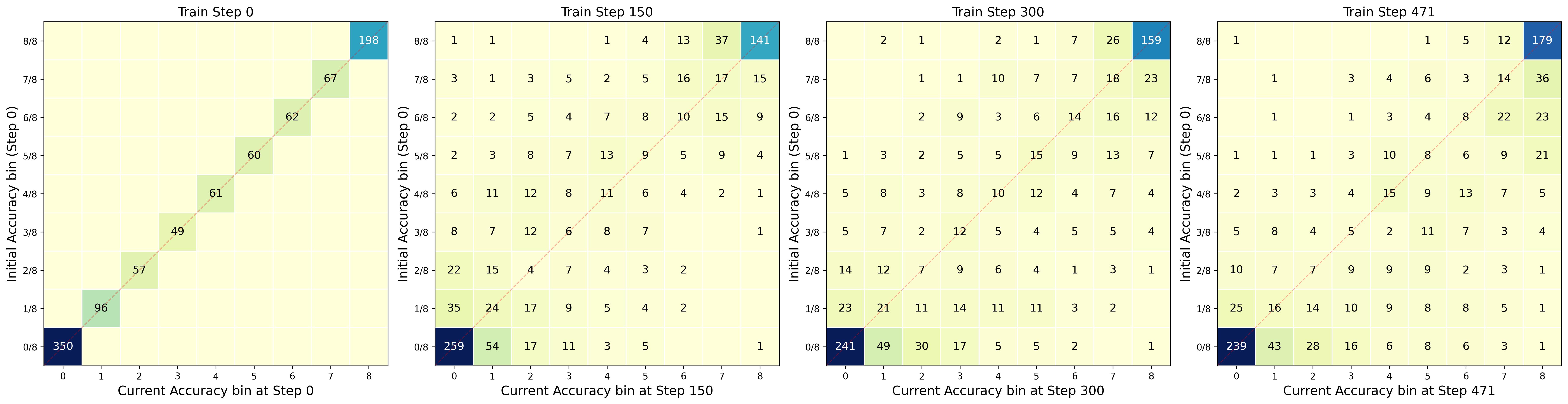}
\caption{\textcolor{black}{\textbf{Accuracy Migration Matrix Analysis.} We track a fixed subset of 1,000 randomly selected prompts from the training set and visualize their movement between accuracy bins (0/8 to 8/8) at Steps 0, 150, 300, and 471 (the last step). The y-axis represents the initial accuracy bin at Step 0, while the x-axis represents the current accuracy bin. \textbf{The scarcity of samples in the lower triangle demonstrates that performance degradation is rare.}}}
\label{fig::migration_matrix}
\end{figure}

\textcolor{black}{\textbf{Stability of Historical High-Quality Samples.} 
A potential concern regarding the reuse of historical high-quality samples ($\mathcal{X}_3$ in Eq. 5) is the assumption of policy consistency—specifically, whether samples that were high-quality under a past policy remain valid for the current policy. To address this, we visualize the evolution of sample difficulty in Figure~\ref{fig::migration_matrix} by tracking the accuracy migration of a training subset.}

\textcolor{black}{The heatmaps in Figure~\ref{fig::migration_matrix} reveal a distinct pattern: the mass is concentrated along the diagonal (performance maintenance) and the upper triangle (performance improvement). Crucially, the proportion of samples exhibiting significant performance degradation (migrating to the lower triangle) is negligible. For example, samples that initially achieved $8/8$ accuracy predominantly remain in the high-accuracy bins throughout the training process, with minimal regression to lower bins. This empirical evidence demonstrates that high-quality reasoning paths learned by RL are robust and resistant to forgetting. Consequently, historical high-quality samples stored in the buffer likely remain high-quality under the current policy, validating the consistency of the $\mathcal{X}_3$ data source.}

\subsection{Hyperparameter Setting}
\label{app:hyper}
\paragraph{Hyperparmeters} The major hyperparameter choices are shown in Table~\ref{tab:hyperparameters}.

\begin{table}[htbp]
\centering
\caption{\textbf{Hyperparameter Configuration} for BAPO Framework on Mathematics Task. For planning and visual geometry tasks, some parameters differ slightly; specific configuration scripts are provided in our code repository.}
\label{tab:hyperparameters}
\resizebox{\textwidth}{!}{
\begin{tabular}{ll|ll|ll}
\toprule
\textbf{Parameter} & \textbf{Value} & \textbf{Parameter} & \textbf{Value} & \textbf{Parameter} & \textbf{Value} \\
\midrule
\multicolumn{6}{c}{\textbf{Rollout Configuration}} \\
\midrule
Top-p & 1 & Top-k & -1 & Temperature & 1 \\
Group size ($G$) & 8 &  Max prompt length & 2048 & Max response length & 8192 \\
Rollout workers & 8 & Sample batch size & 256 & Seed & 42 \\
\midrule
\multicolumn{6}{c}{\textbf{Training Configuration}} \\
\midrule
Learning rate & 1e-6 & Train mini batch size & 64 & GAE lambda  & 1.0 \\
Training epochs & 3 & KL coefficient ($\beta$) & 0.001 & Entropy coefficient & 0.001 \\
\midrule
\multicolumn{6}{c}{\textbf{Off-policy Configuration}} \\
\midrule
\cellcolor{yellow!20}$c_1$ threshold & \cellcolor{yellow!20} \textcolor{black}{$1/8$} & \cellcolor{yellow!20}$c_2$ range & \cellcolor{yellow!20}\textcolor{black}{$[1/8, 4/8]$} & \cellcolor{yellow!20}$c_3$ range & \cellcolor{yellow!20}\textcolor{black}{$[2/8, 5/8]$} \\
\cellcolor{yellow!20}Buffer size ($|B|$) & \cellcolor{yellow!20}256 & \cellcolor{yellow!20}Rollout delay ($v$) & \cellcolor{yellow!20}5 & \cellcolor{yellow!20}Re-evaluation freq ($m$) & \cellcolor{yellow!20}5 \\
\cellcolor{yellow!20}Gaussian std ($\sigma$) & \cellcolor{yellow!20}0.2 & \cellcolor{yellow!20}Gaussian mean ($\mu$) & \cellcolor{yellow!20}0.5 & \cellcolor{yellow!20} Max re-evaluate prompts & \cellcolor{yellow!20}128 \\
\bottomrule
\end{tabular}
}
\end{table}

\paragraph{Reward Function} To evaluate the impact of our method, we adopt a simple reward function as below. All training experiments employ the same reward function.

\[
r(x,y) = 
\begin{cases} 
1, & \text{if } y \text{ is correct} \\
0, & \text{otherwise}
\end{cases}
\]

\paragraph{Datasets and Benchmarks} To evaluate the models above, we use three training datasets and eight benchmarks categorized into mathematical, planning and vision geometry reasoning benchmarks as described in Table~\ref{tab:benchmarks}.
\begin{table}[htbp]
  \centering
  \caption{\textbf{Datasets and Benchmarks} used in this study.}
  \resizebox{\linewidth}{!}{
    \begin{tabular}{rccrrrr}
    \toprule
    \textbf{Dataset} & \textbf{\#Train} & \textbf{\#Test} & \textbf{Task Type} & \textbf{Domain} & \textbf{License} & \textbf{Source} \\
    \midrule
    \multicolumn{7}{c}{\textbf{Training Datasets}} \\
    \midrule
    \textsc{DeepScaleR-1.5B-Preview} & 40,000 & -- & Math reasoning & Mathematics & Apache 2.0 & \href{https://huggingface.co/agentica-org/DeepScaleR-1.5B-Preview}{Link} \\
    \textsc{Countdown-Tasks-3to4} & 49,000 & -- & Logic reasoning & Planning & Apache 2.0 & \href{https://huggingface.co/datasets/Jiayi-Pan/Countdown-Tasks-3to4}{Link} \\
    \textsc{Geometry3k} & 2,100 & -- & Visual reasoning & Visual Geometry & Apache 2.0 & \href{https://huggingface.co/datasets/hiyouga/geometry3k}{Link} \\
    \midrule
    \multicolumn{7}{c}{\textbf{Test Benchmarks}} \\
    \midrule
    \textsc{AIME24} & -- & 30 & Math competition & Mathematics & MIT & \href{https://huggingface.co/datasets/Maxwell-Jia/AIME_2024}{Link} \\
    \textsc{AMC} & -- & 83 & Math competition & Mathematics & Apache 2.0 & \href{https://huggingface.co/datasets/AI-MO/aimo-validation-amc}{Link} \\
    \textsc{MATH500} & -- & 500 & Math reasoning & Mathematics & - & \href{https://huggingface.co/datasets/HuggingFaceH4/MATH-500}{Link} \\
    \textsc{Minerva} & -- & 272 & Math reasoning & Mathematics & Apache 2.0 & \href{https://huggingface.co/datasets/math-ai/minervamath}{Link} \\
    \textsc{Olympiad} & -- & 674 & Math competition & Mathematics & Apache 2.0 & \href{https://huggingface.co/datasets/Hothan/OlympiadBench}{Link} \\
    \textsc{Countdown-Tasks-3to4} & -- & 200$^{*}$ & Logic reasoning & Planning & Apache 2.0 & \href{https://huggingface.co/datasets/Jiayi-Pan/Countdown-Tasks-3to4}{Link} \\
    \textsc{Countdown-Tasks-4} & -- & 200$^{*}$ & Logic reasoning & Planning & Apache 2.0 & \href{https://huggingface.co/datasets/Jiayi-Pan/Countdown-Tasks-4}{Link} \\
    \textsc{Geometry3k} & -- & 901 & Visual reasoning & Visual Geometry & Apache 2.0 & \href{https://huggingface.co/datasets/hiyouga/geometry3k}{Link} \\
    \bottomrule
    \end{tabular}%
  }
  \label{tab:benchmarks}%
  \makebox[\textwidth][r]{\tiny *We only use a random subset of this benchmark for faster ablation studies.}
\end{table}

\subsection{Algorithm}
Algorithm~\ref{alg::bapo} presents the proposed BAPO, which can be seamlessly integrated with any GRPO-like RLVR algorithm.

\begin{algorithm}
\caption{Batch Adaptation Policy Optimization (BAPO)}
\begin{algorithmic}[1]
\REQUIRE Policy $\pi_{\theta_0}$, buffer $\mathcal{B} = \emptyset$, thresholds $c_1, c_2, c_3$, delay steps $v$, re-evaluate frequency $m$

\FOR{$t = 1$ to $T$}
    \STATE \textcolor{blue!70!blue}{// Off-policy Rollout Phase}
    \IF{$t \bmod v = 0$}
        \STATE Synchronize rollout policy's parameter with trainer: $\alpha = \pi_{\theta_{t}}$
    \ENDIF
    \STATE Using rollout policy $\alpha$ to generate $G$ responses $\{y_j\}_{j=1}^{G}$ for each question $x$
    \STATE Compute log probabilities $\alpha(y|x)$ and rewards $r$ for constructing the online batch $\mathcal{X}_{\text{on}}$
    \STATE Store samples into buffer $\mathcal{B}_{\text{bad}} \leftarrow \{(x, y, \alpha(y|x), r) \in \mathcal{X}_{\text{on}} : \mu_{\alpha,r}(x) \leq c_1\}$ 
    \STATE Store samples into buffer $\mathcal{B}_{\text{high}} \leftarrow \{(x, y, \alpha(y|x), r) \in \mathcal{X_{\text{on}}} : c_2  \leq \mu_{\alpha,r}(x) \leq c_3\}$ 
    \STATE \textcolor{red!70!red}{// Off-policy Training Phase}
    \STATE $\mathcal{X}_1 \leftarrow$ \textcolor{black}{online filter} on $\mathcal{X}_{\text{on}}$ with $\mu_{\alpha,r}(x) \in \{\frac{1}{G}, \ldots, \frac{G-1}{G}\}$ \textcolor{red!50!red}{(Filtered Fresh Samples)}
    \STATE $\mathcal{X}_2 \leftarrow \emptyset$
    \IF{$t \bmod m = 0$}
        \STATE Re-evaluate $\mathcal{B}_{\text{bad}}$ with $\pi_{\theta_t}$ to get $\mathcal{X}_2$ using Equation~\ref{eq::x2} \textcolor{red!50!red}{(Re-evaluated Difficult Samples)}
    \ENDIF
    \STATE $\mathcal{X}_3 \leftarrow$ Sample from $\{(x,y) \in \mathcal{B}_{\text{high}} : \mu_{\alpha_{\mathcal{B}},r}(x) \in [c_2, c_3]\}$ \textcolor{red!50!red}{(Historical High-quality Samples)}
    \STATE Final batch $\leftarrow \mathcal{X}_1 \cup \mathcal{X}_2 \cup \mathcal{X}_3$ 
    \STATE Compute advantages and update critic/actor with $\text{final\_batch}$
    \STATE Add $\mathcal{D}_t$ to buffer $\mathcal{B}$ 
\ENDFOR
\end{algorithmic}
\label{alg::bapo}
\end{algorithm}

\subsection{Generalization Analysis}
\label{app:generalization_ppo}

\textcolor{black}{To demonstrate the algorithmic generalizability of our framework, we extended the Batch Adaptation paradigm to Proximal Policy Optimization (PPO), denoted as \textbf{BA-PPO}. In this experiment, both the Actor and Critic networks were initialized with the \textbf{Qwen3-4B} backbone and trained on the \textbf{DeepScaleR} dataset with a maximum response length of 4K tokens. We maintained consistency with the foundational BAPO configuration by applying standard zero-advantage filtering for $\mathcal{X}_1$ (removing only all-correct and all-wrong groups), utilizing the initial BAPO values for thresholds $c_1, c_2, c_3$, and setting the buffer size to 64.}

\begin{figure}[htbp]
    \centering
    \includegraphics[width=0.9\textwidth]{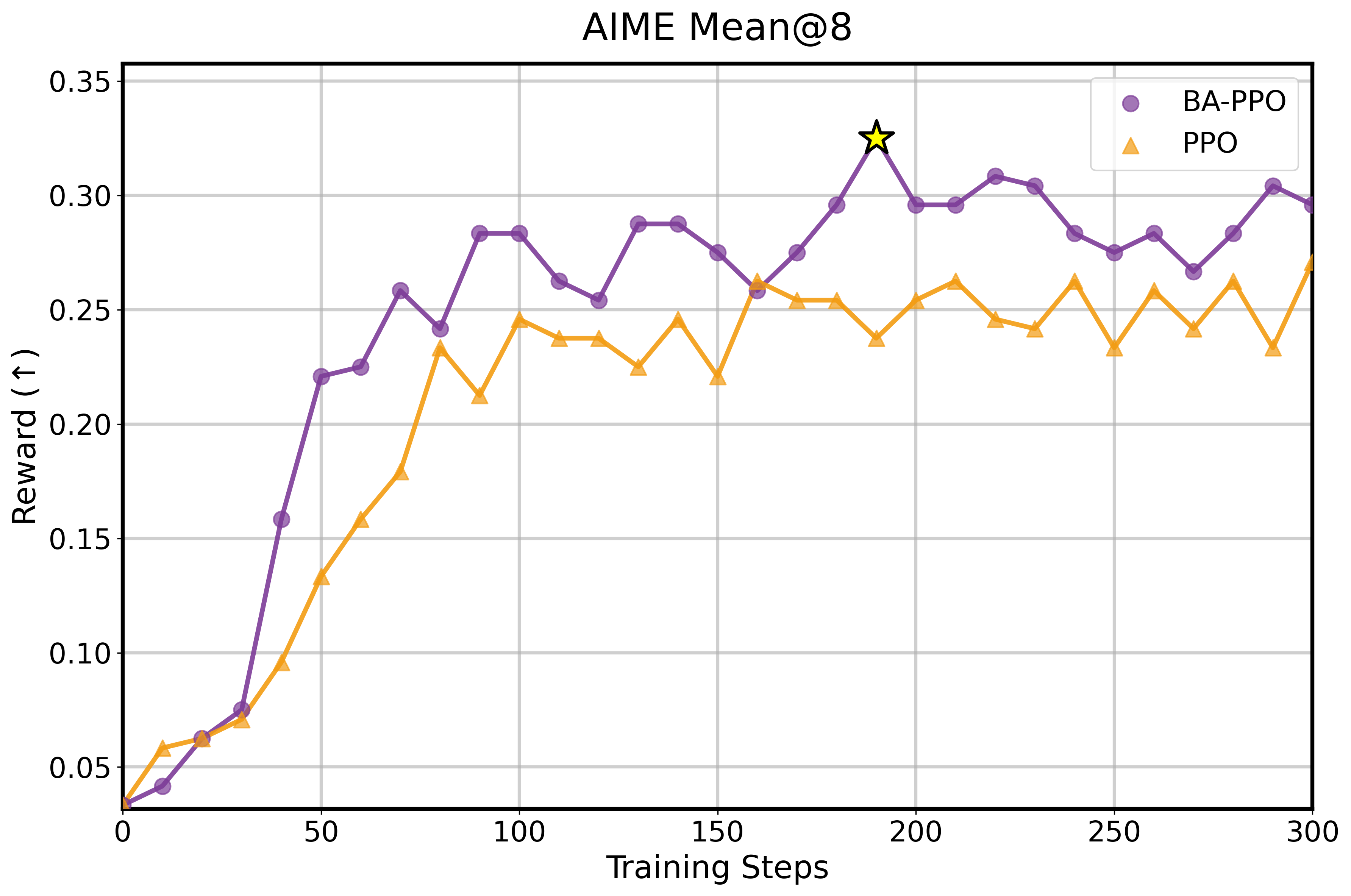}
    \caption{\textcolor{black}{\textbf{Generalization to Actor-Critic Algorithms (BA-PPO).} Performance comparison between standard PPO (orange triangles) and BA-PPO (purple circles) on the AIME 2024 benchmark using Qwen3-4B. The star ($\star$) marks the peak performance of BA-PPO ($0.325$).}}
    \label{fig:ppo_comparison}
\end{figure}

\textcolor{black}{As illustrated in Figure~\ref{fig:ppo_comparison}, BA-PPO achieved a remarkable performance gain of \textbf{+5.5} on the \textbf{AIME 2024 benchmark} compared to the standard PPO baseline. This result further confirms that the core principle of dynamic batch construction is effective not only for GRPO but also functions as a robust, algorithm-agnostic enhancement for actor-critic methods.}

\end{document}